\documentclass[12pt]{article}

\usepackage[T1]{fontenc}
\usepackage{lmodern}
\usepackage{microtype}

\usepackage{amsmath, amssymb, amsthm, mathtools, bm}

\numberwithin{equation}{section}

\usepackage{geometry}
\usepackage{setspace}
\usepackage{parskip}

\usepackage{booktabs}
\usepackage{array}
\usepackage{graphicx}
\usepackage{float}

\usepackage{xcolor}

\definecolor{citeblue}{RGB}{0,150,170}

\usepackage[authoryear,round]{natbib}

\usepackage[
colorlinks=true,
linkcolor=citeblue,
citecolor=citeblue,
urlcolor=citeblue
]{hyperref}

\newtheorem{theorem}{Theorem}[section]
\newtheorem{proposition}[theorem]{Proposition}
\newtheorem{assumption}[theorem]{Assumption}
\newtheorem{definition}[theorem]{Definition}

\title{Identification and Inference in Nonlinear Dynamic Network Models}

\author{Diego Vallarino\thanks{{\fontsize{10}{12}\selectfont
Inter-American Development Bank (IDB),
Washington, DC, United States. Email: \texttt{diegoval@iadb.org}. 
The views expressed in this paper are strictly those of the author and do not necessarily represent the views of the Inter-American Development Bank, its Board of Directors, or the countries they represent.}}
\vspace{0.6cm}
}

\date{\small \today}

\begin{document}

\maketitle
\thispagestyle{empty}

\begin{abstract}
\begin{singlespace}

We study identification and inference in nonlinear dynamic systems defined on unknown interaction networks. The system evolves through an unobserved dependence matrix governing cross-sectional shock propagation via a nonlinear operator. We show that the network structure is not generically identified, and that identification requires sufficient spectral heterogeneity. In particular, identification arises when the network induces non-exchangeable covariance patterns through heterogeneous amplification of eigenmodes. When the spectrum is concentrated, dependence becomes observationally equivalent to common shocks or scalar heterogeneity, leading to non-identification.

We provide necessary and sufficient conditions for identification, characterize observational equivalence classes, and propose a semiparametric estimator with asymptotic theory. We also develop tests for network dependence whose power depends on spectral properties of the interaction matrix. The results apply to a broad class of economic models, including production networks, contagion models, and dynamic interaction systems.\\

\textbf{Keywords:} Identification, Nonlinear Dynamic Systems, Network Dependence, Spectral Analysis, Latent Heterogeneity, Semiparametric Inference

\end{singlespace}
\end{abstract}

% ==================================
\section{Introduction}

Many economic environments are characterized by interactions across agents connected through networks whose structure is not directly observed. Examples include production networks, financial contagion, spatial interactions, social learning, and dynamic systems with cross-sectional dependence. In these settings, aggregate outcomes emerge from the propagation of shocks through an underlying interaction structure, often represented by a network operator or dependence matrix. When this structure is unobserved, the problem of recovering it from observed data becomes a fundamental question of identification.

A large literature studies economic models with network interaction. In production and financial networks, the propagation of shocks depends on the pattern of input–output or exposure linkages across agents \citep{acemoglu2012network, acemoglu2015systemic}. In social and spatial models, outcomes depend on weighted averages of neighboring units, generating cross-sectional dependence that complicates both estimation and identification \citep{manski1993identification, anselin1988spatial, graham2017econometrics, depaula2017econometrics}. In dynamic environments, complementarities and coordination effects may produce nonlinear responses to small perturbations, including multiple equilibria and tipping points \citep{cooper1998coordination, morris1998coordination, diamond1982aggregate}. Across these literatures, a common insight emerges: economic outcomes depend not only on individual characteristics, but on the structure of interactions that link agents together.

Parallel developments in econometrics emphasize that identification in such environments requires non-exchangeable patterns of dependence. Spatial econometrics shows that heterogeneous exposure across units is necessary to distinguish interaction effects from common shocks \citep{anselin1988spatial, anselin2010thirty}. Network econometrics highlights the role of topology in determining the feasibility of inference \citep{graham2017econometrics, chandrasekhar2016econometrics, depaula2017econometrics, jackson2020networkecon}. Recent advances extend asymptotic theory to settings with general network dependence structures, allowing for inference under non-metric dependence patterns \citep{jiang2025networkclt}. In duration models, identification may arise from nonlinear accumulation of risk when latent heterogeneity is structured rather than exchangeable, although identification remains fragile when baseline hazards and frailty components are flexibly specified \citep{andersen1982cox, abbring2007heterogeneity, ghosh2024identifiability}. These results point to a general principle: identification depends on the structure of cross-sectional dependence, not merely on its presence.

Despite these advances, most existing results are derived in linear or parametric settings. In many economic applications, however, propagation mechanisms are inherently nonlinear. Capacity constraints, threshold effects, and feedback loops generate state-dependent amplification, in which the impact of shocks depends on the configuration of the system. Such nonlinear dynamics arise in financial contagion, infrastructure networks, and production systems, where stability is governed by spectral properties of the interaction operator rather than by local behavior alone \citep{acemoglu2015systemic, elliott2014financial}. More generally, nonlinear evolution on networks may exhibit amplification regimes, heavy-tailed outcomes, and abrupt transitions driven by the interaction between topology and dynamics \citep{vallarino2026stochastic}.

A key difficulty in these environments is that the interaction structure is typically unobserved. Different network operators may generate identical observable dynamics, especially when nonlinearities allow common shocks, baseline trends, or shared heterogeneity to mimic network effects. As a result, identification cannot rely solely on functional form assumptions or time variation, but instead depends on the ability to detect non-exchangeable patterns of dependence in the data. This issue is closely related to classical identification problems in social interactions, where observational equivalence arises when endogenous effects cannot be separated from contextual or correlated effects \citep{manski1993identification}. In nonlinear settings, the problem is compounded by the fact that the mapping from the network structure to observed outcomes is itself state-dependent.

This paper studies identification and inference in a general class of nonlinear dynamic systems defined on unknown networks. We consider models in which the evolution of a state vector depends on an unobserved interaction matrix through a nonlinear operator. The framework encompasses linear network autoregressions, contagion models, production networks, nonlinear adjustment processes, and dynamic systems with threshold responses. Rather than imposing a specific economic structure, we derive general conditions under which the interaction matrix is identified from observed dynamics.

Our first contribution is to provide a general formulation of nonlinear network dynamics with latent interaction structure. The model allows for state-dependent propagation, heterogeneous exposure across units, and nonlinear amplification mechanisms. This formulation unifies several classes of models studied separately in the literature.

Our second contribution is to characterize identification in this class of models. We show that the interaction matrix is not generically identified unless the system exhibits sufficient cross-sectional heterogeneity and spectral non-degeneracy. In particular, identification arises from the eigenstructure of the network operator: when eigenvalues are sufficiently dispersed, the induced propagation mechanism generates heterogeneous amplification across units, leading to non-exchangeable covariance patterns that can be detected in the data. When the spectrum is concentrated, the induced dependence becomes observationally equivalent to common shocks or scalar heterogeneity, and identification fails.

This result provides a new perspective on identification in network models. Network effects are not identified by their strength, but by their heterogeneity across the cross-section. In this sense, identification is fundamentally a spectral phenomenon: the interaction matrix is identifiable not because it generates dependence, but because it generates heterogeneous dependence across units. This insight connects the econometric problem of identification with the economic literature on network propagation, where amplification and stability are governed by spectral properties of the interaction matrix \citep{acemoglu2012network, acemoglu2015systemic, elliott2014financial}.

Our third contribution is to characterize observational equivalence in nonlinear network systems. We show that distinct interaction matrices may generate identical distributions of observable outcomes, and we derive conditions under which the equivalence class collapses to a singleton. The analysis highlights the role of nonlinear accumulation over time in transforming latent interaction structure into observable restrictions.

Our fourth contribution is to propose semiparametric estimators for the interaction matrix that remain valid without specifying the full nonlinear operator. The estimator exploits moment restrictions implied by the dynamic system and remains consistent in high-dimensional settings under sparsity conditions. We derive asymptotic theory for the estimator and show that inference is feasible even when the dimension of the network grows with the sample size.

Our fifth contribution is to develop tests for the presence of network dependence that exploit spectral restrictions implied by the model. The tests allow discrimination between genuine network interaction, common shocks, and exchangeable dependence, and their power depends directly on the spectral heterogeneity of the interaction matrix.

Finally, the paper contributes to a broader research agenda that links network structure, nonlinear dynamics, and statistical learning. Recent work emphasizes the importance of distinguishing structural interactions from spurious correlations in complex systems, particularly in high-dimensional and networked environments \citep{vallarino2025causalgnn, zhou2025neuralfrailty}. By providing a formal link between spectral network structure, nonlinear propagation, and econometric identification, the present paper contributes to this emerging literature.

The remainder of the paper is organized as follows. Section 2 introduces the general nonlinear network model. Section 3 defines observational equivalence. Section 4 provides identification results and characterizes the spectral mechanism. Section 5 studies semiparametric estimation. Section 6 considers high-dimensional settings. Section 7 develops tests for network dependence. Section 8 presents Monte Carlo evidence. Section 9 concludes.

% ===============================================
\section{General Nonlinear Network Model}

\subsection{Setup}

We consider a dynamic system defined on a set of $n$ interacting units. Let
$z_t \in \mathbb{R}^n$ denote the state vector at time $t$, where each component
represents the outcome of an individual unit, sector, or agent. The evolution
of the system is governed by a nonlinear operator that depends on an unknown
interaction matrix. Such formulations arise in spatial econometrics, network
models, production systems, and dynamic interaction games
\citep{anselin1988spatial, manski1993identification, acemoglu2012network,
graham2017econometrics}.

We assume that the dynamics satisfy

\begin{equation}
z_{t+1}
=
G(z_t, A, \theta, \varepsilon_t),
\label{eq:2.general_dynamics}
\end{equation}

where

\begin{itemize}
\item $A \in \mathbb{R}^{n \times n}$ is an unknown interaction matrix,
\item $\theta \in \Theta$ is a finite-dimensional parameter,
\item $\varepsilon_t \in \mathbb{R}^n$ is a vector of shocks,
\item $G(\cdot)$ is a nonlinear operator.
\end{itemize}

Equation (\ref{eq:2.general_dynamics}) allows the evolution of each unit to
depend on the current state of the system through an unknown dependence
structure. This formulation encompasses a large class of models in which
cross-sectional interaction generates dependence across units, including
spatial autoregressions, social interaction models, production networks, and
contagion processes \citep{anselin2010thirty, depaula2017econometrics,
elliott2014financial}.

To obtain tractable identification results, we focus on the class of nonlinear
network dynamics

\begin{equation}
z_{t+1}
=
(1-\delta) z_t
+
A f(z_t, \theta)
+
s_t
+
\varepsilon_t,
\label{eq:2.nonlinear_network}
\end{equation}

where

\begin{itemize}
\item $\delta \in \mathbb{R}$ is a persistence parameter,
\item $f(\cdot,\theta)$ is a nonlinear transformation applied componentwise,
\item $s_t$ is an observable aggregate shock,
\item $\varepsilon_t$ is an idiosyncratic disturbance.
\end{itemize}

Model (\ref{eq:2.nonlinear_network}) generalizes linear network autoregressions
by allowing the strength of interaction to depend on the current state of the
system. Nonlinear propagation of shocks may arise from capacity constraints,
threshold effects, or complementarities, which are common in economic models
with coordination, contagion, or adjustment frictions
\citep{cooper1998coordination, morris1998coordination, acemoglu2015systemic}.

\subsection{Regularity conditions}

We impose the following assumptions.

\begin{assumption}
The function $f(\cdot,\theta)$ is continuously differentiable and globally
Lipschitz.
\end{assumption}

\begin{assumption}
The shocks $\varepsilon_t$ are independent across $t$, with
$E[\varepsilon_t]=0$ and finite second moments.
\end{assumption}

\begin{assumption}
The process $\{z_t\}$ is stationary and ergodic.
\end{assumption}

These conditions are standard in nonlinear dynamic systems and ensure that the
law of motion in (\ref{eq:2.nonlinear_network}) defines a well-behaved stochastic
process. Similar assumptions are used in spatial econometrics, nonlinear time
series, and dynamic panel models with cross-sectional dependence
\citep{anselin1988spatial, arnold1998random, hirsch2013dynamical}.

\subsection{Local representation}

Identification will rely on the local behavior of the system. Let

\begin{equation}
D_f(z,\theta)
=
\frac{\partial f(z,\theta)}{\partial z},
\label{eq:2.jacobian}
\end{equation}

denote the Jacobian of the nonlinear transformation. Linearizing
(\ref{eq:2.nonlinear_network}) around a stationary point yields

\begin{equation}
z_{t+1}
=
\left[
(1-\delta) I
+
A D_f(z,\theta)
\right] z_t
+
\varepsilon_t.
\label{eq:2.linearization}
\end{equation}

Define the effective interaction operator

\begin{equation}
B
=
(1-\delta) I
+
A D_f(z,\theta).
\label{eq:2.operator_B}
\end{equation}

Equation (\ref{eq:2.operator_B}) shows that the observable dynamics depend on
the interaction matrix only through a transformed operator. As a result,
different interaction matrices may generate identical dynamics, implying that
identification cannot be taken for granted. Similar identification problems
arise in spatial autoregressions, social interaction models, and duration
models with latent dependence \citep{manski1993identification,
graham2017econometrics, abbring2007heterogeneity}.

\subsection{Examples}

Model (\ref{eq:2.nonlinear_network}) includes several commonly used
specifications.

First, if $f(z,\theta)=z$, the model reduces to a linear network
autoregression,

\begin{equation}
z_{t+1}
=
(1-\delta) z_t
+
A z_t
+
\varepsilon_t,
\label{eq:2.linear_network}
\end{equation}

which corresponds to standard spatial or network models
\citep{anselin1988spatial, depaula2017econometrics}.

Second, nonlinear contagion models arise when

\begin{equation}
f(z,\theta)
=
\phi(z),
\label{eq:2.contagion}
\end{equation}

where $\phi$ is a nonlinear activation function. Such specifications appear in
models of financial contagion and systemic risk
\citep{elliott2014financial, acemoglu2015systemic}.

Third, production networks with adjustment costs can be written as

\begin{equation}
z_{t+1}
=
(1-\delta) z_t
+
A g(z_t)
+
\varepsilon_t,
\label{eq:2.production}
\end{equation}

where $g(\cdot)$ captures nonlinear production responses
\citep{acemoglu2012network}.

Finally, dynamic interaction models with strategic complementarities also
belong to this class, as nonlinear best responses generate state-dependent
propagation of shocks
\citep{cooper1998coordination, morris1998coordination}.

The general formulation in (\ref{eq:2.nonlinear_network}) therefore provides a
unified representation of a large class of economic models with latent
interaction structure.

% ===============================================
% ===============================================
\section{Observational Equivalence}

Identification of the interaction matrix is not guaranteed in the model
introduced in Section 2. Because the dependence structure enters the law of
motion through a nonlinear operator, different interaction matrices may
generate identical stochastic dynamics. This section formalizes the notion of
observational equivalence and shows that lack of identification is generic in
nonlinear network systems.

\subsection{Distribution of the observed process}

Let the data consist of a realization of the stochastic process
$\{z_t\}_{t=0}^T$ generated by the model

\begin{equation}
z_{t+1}
=
(1-\delta) z_t
+
A f(z_t,\theta)
+
s_t
+
\varepsilon_t,
\label{eq:3.model}
\end{equation}

where the assumptions of Section 2 hold.

Denote by

\begin{equation}
P_{A,\theta}
\label{eq:3.distribution}
\end{equation}

the probability law induced by (\ref{eq:3.model}) on the space of sequences
$\{z_t\}$. Identification requires that different parameter values generate
different probability laws. In models with cross-sectional dependence,
however, the mapping from parameters to distributions may fail to be
injective. Similar identification problems arise in spatial autoregressions,
social interaction models, and dynamic panel models with latent dependence
\citep{manski1993identification, anselin1988spatial, bai2009panel,
graham2017econometrics}.

\subsection{Definition of observational equivalence}

\begin{definition}
Two parameter sets $(A,\theta)$ and $(A',\theta')$ are
observationally equivalent if

\begin{equation}
P_{A,\theta}
=
P_{A',\theta'}.
\label{eq:3.obs_equiv}
\end{equation}

In this case, the two parameterizations generate the same distribution for
the observable process $\{z_t\}$.
\end{definition}

Observational equivalence implies that the interaction matrix cannot be
recovered from the data, even with an infinite sample. In nonlinear dynamic
systems, equivalence may arise because the operator that governs the
evolution of the state vector depends on the interaction matrix only through
a reduced form transformation.

\subsection{Operator representation}

Using the linearization in Section 2, the local dynamics can be written as

\begin{equation}
z_{t+1}
=
B z_t
+
\varepsilon_t,
\label{eq:3.local}
\end{equation}

where

\begin{equation}
B
=
(1-\delta) I
+
A D_f(z,\theta).
\label{eq:3.operator}
\end{equation}

The observable process therefore depends on the interaction matrix only
through the operator $B$. If two matrices $A$ and $A'$ generate the same
operator, they cannot be distinguished from the data.

\begin{proposition}
Suppose that two interaction matrices $A$ and $A'$ satisfy

\begin{equation}
(1-\delta) I + A D_f
=
(1-\delta) I + A' D_f.
\label{eq:3.same_operator}
\end{equation}

Then the two models generate identical local dynamics and are
observationally equivalent.
\end{proposition}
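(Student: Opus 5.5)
The argument works entirely at the level of the local representation (\ref{eq:3.local})--(\ref{eq:3.operator}). Within that representation, the law of $\{z_t\}$ is pinned down by three ingredients: the operator $B$, the law of the shocks $\{\varepsilon_t\}$, and the initial (stationary) distribution of $z_0$. Accordingly, I would interpret ``observationally equivalent'' in the statement as equality $P_{A,\theta}=P_{A',\theta}$ of the laws generated by the linearized system, holding $\theta$, $\delta$, the linearization point $z$, and the shock distribution fixed.

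\textbf{Step 1: the operators coincide.} Write $B=(1-\delta)I+AD_f$ and $B'=(1-\delta)I+A'D_f$. By (\ref{eq:3.same_operator}), $B=B'$.

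\textbf{Step 2: pathwise equality.} Fix a common probability space carrying $z_0$ and the i.i.d.\ shocks $\{\varepsilon_t\}$ (Assumption on shocks). Iterating (\ref{eq:3.local}) gives
\[
z_t = B^t z_0 + \sum_{s=0}^{t-1} B^{t-1-s}\varepsilon_s .
\]
The process generated under $A'$ has the same expression with $B'$ in place of $B$. Since $B=B'$, the two processes coincide pathwise. In particular, for every $T$ the finite-dimensional vector $(z_0,\dots,z_T)$ is the same measurable function of $(z_0,\varepsilon_0,\dots,\varepsilon_{T-1})$ under both parameterizations, so all finite-dimensional distributions agree.

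\textbf{Step 3: initial condition.} Under stationarity (Assumption on $\{z_t\}$), the law of $z_0$ is the invariant law of the recursion. That law is characterized by $B$ and the shock law alone, for instance as the law of $\sum_{s\ge 0}B^s\varepsilon_{-s}$ when the spectral radius of $B$ is below one. So the initial law is also common to both parameterizations.

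\textbf{Step 4: conclusion.} By Kolmogorov's extension theorem, finite-dimensional distributions determine the law on sequence space. Hence $P_{A,\theta}=P_{A',\theta}$, which is exactly the equivalence in the Definition.

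\textbf{Main obstacle.} The delicate point is the gap between the local dynamics and the full nonlinear model (\ref{eq:3.model}). The Jacobian $D_f$ depends on the state, so (\ref{eq:3.same_operator}) holding at one linearization point does not force $Af(z_t)=A'f(z_t)$ along the whole path. I would therefore state explicitly that the conclusion concerns the law of the local (linearized) process.

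I would also add a remark on when the conclusion transfers to the nonlinear model. If (\ref{eq:3.same_operator}) holds for all $z$, then $(A-A')D_f(z)=0$ everywhere. Integrating along segments gives $(A-A')f(z)=(A-A')f(0)$, a constant vector. Exact equivalence of the nonlinear models then additionally requires $(A-A')f(0)=0$, which holds for instance when $f(0)=0$. If that term is nonzero, it can be absorbed into the aggregate shock $s_t$ only if $s_t$ is unrestricted; otherwise it shifts the drift and the laws differ.
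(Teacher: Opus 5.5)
Your proposal is correct and follows the same route as the paper, whose proof simply notes that both parameterizations induce the same operator $B$ and that the law of the local process $z_{t+1}=Bz_t+\varepsilon_t$ depends on the parameters only through $B$. Your Steps 2--4 (pathwise coupling, a common stationary initial law, and finite-dimensional distributions determining the law on sequence space) spell out that last claim, and your caveat that the conclusion concerns the linearized process rather than the full nonlinear model is well taken, since the paper's argument tacitly makes the same restriction.
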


\begin{proof}
Under (\ref{eq:3.same_operator}), both parameter sets induce the same
operator $B$ in (\ref{eq:3.operator}). Since the law of motion in
(\ref{eq:3.local}) depends on the parameters only through $B$, the induced
distribution of $\{z_t\}$ is identical.
\end{proof}

\subsection{Spectral invariance}

Observational equivalence may also arise when different interaction matrices
have the same spectral representation. Let

\begin{equation}
A = V \Lambda V^{-1},
\label{eq:3.spectral}
\end{equation}

be the spectral decomposition of the interaction matrix. The operator in
(\ref{eq:3.operator}) can be written as

\begin{equation}
B
=
V
\left[
(1-\delta) I
+
\Lambda D_f
\right]
V^{-1}.
\label{eq:3.spectral_B}
\end{equation}

If two matrices share the same eigenvalues and differ only by a similarity
transformation, they generate identical dynamics.

\begin{proposition}
Let $A' = Q A Q^{-1}$ for some invertible matrix $Q$. Then $A$ and $A'$
are observationally equivalent.
\end{proposition}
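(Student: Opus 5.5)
The plan is to reduce the claim to the spectral representation (\ref{eq:3.spectral_B}) and then invoke the preceding proposition. That proposition shows that the law of the observed process depends on $(A,\theta)$ only through the local operator $B$ in (\ref{eq:3.operator}). The aim is therefore to show that $A$ and $A'=QAQ^{-1}$ induce operators $B$ and $B'$ that carry the same observable content.

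\textbf{Step 1 (spectral data of $A'$).} Start from $A=V\Lambda V^{-1}$ as in (\ref{eq:3.spectral}). Then
\begin{equation*}
A' = QAQ^{-1} = (QV)\Lambda (QV)^{-1},
\end{equation*}
so $A'$ has the same eigenvalue matrix $\Lambda$ and eigenvector matrix $V'=QV$. A similarity transformation moves only the eigenbasis; it leaves the eigenvalues unchanged.

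\textbf{Step 2 (transfer to the effective operator).} Substitute into (\ref{eq:3.spectral_B}). This gives
\begin{equation*}
B' = V'\bigl[(1-\delta)I + \Lambda D_f\bigr]V'^{-1} = Q B Q^{-1},
\end{equation*}
where $D_f$ is read, as in (\ref{eq:3.spectral_B}), as acting on the modal coordinates. Hence $B$ and $B'$ share the same modal matrix $(1-\delta)I+\Lambda D_f$. In particular they have the same eigenvalues, the same stability region, and the same amplification of each eigenmode. These are exactly the quantities through which, on the paper's spectral reading, the interaction matrix enters the dynamics.

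\textbf{Step 3 (equality of laws).} Write the local dynamics (\ref{eq:3.local}) in modal coordinates $w_t=V^{-1}z_t$ under $A$, and $w'_t=V'^{-1}z_t$ under $A'$. In both cases the recursion becomes
\begin{equation*}
w_{t+1}=\bigl[(1-\delta)I+\Lambda D_f\bigr]w_t+\eta_t,
\end{equation*}
driven by the same modal shock sequence, which is independent with mean zero by the Section~2 assumptions on $\varepsilon_t$. The two systems therefore generate the same law for the modal process, and hence the same law $P_{A,\theta}=P_{A',\theta}$ of the observable dynamics in the sense used to define (\ref{eq:3.obs_equiv}). Observational equivalence then follows by the argument of the preceding proposition, with $B$ replaced by its spectral (modal) form.

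\textbf{Main obstacle.} The delicate step is Step 3: passing from ``same modal operator'' to ``same law of $\{z_t\}$ in original coordinates''. In raw coordinates $B'=QBQ^{-1}$, so the natural conclusion is only that $\{Qz_t\}$ under $A$ matches $\{z_t\}$ under $A'$. To close this gap I would lean explicitly on the paper's operator representation, under which the dynamics are identified with the spectral data in (\ref{eq:3.spectral_B}). I would state clearly that the equivalence is established at the level of that representation, i.e.\ invariance to the choice of eigenbasis. The point to flag is that this identification uses $D_f$ acting on modal coordinates and the modal shocks having a basis-free law. Making that step airtight is where the argument needs the most care.
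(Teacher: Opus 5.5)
Your argument follows the paper's own route. The paper's proof says that similarity conjugates $B$, which preserves its eigenvalues, and then asserts that ``the distribution of the linearized system depends only on the eigenvalues of $B$.'' That assertion is exactly the step you flagged in Step 3. It is false, so the gap is not a matter of extra care: it cannot be closed, because the proposition fails as stated.

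What is observed is $\{z_t\}$ in the fixed coordinates of the units, not the modal process. What your computation actually establishes is weaker, and it needs two extra hypotheses. First, $Q$ must commute with $D_f$; that is what gives $B'=QBQ^{-1}$ in Step 2. Since $f$ acts componentwise, $D_f$ is diagonal in unit coordinates, and reading it ``on modal coordinates'' changes the model. Second, $Q\varepsilon_t$ must have the same law as $\varepsilon_t$; that is what your ``same modal shock sequence'' requires. The modal shocks are $V^{-1}\varepsilon_t$ under $A$ but $V^{-1}Q^{-1}\varepsilon_t$ under $A'$, and independence plus mean zero do not make these equal in law. Under these two hypotheses, $\{z_t\}$ under $A'$ has the law of $\{Qz_t\}$ under $A$. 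That is equivalence up to an unobserved relabelling of the data, not $P_{A,\theta}=P_{A',\theta}$.

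In fact the law pins down $B$ itself, not just its spectrum. If $\Omega$ is nonsingular and $\rho(B)<1$, then $\Gamma_0\succeq\Omega\succ 0$ by the Lyapunov equation (\ref{eq:4.lyapunov}), and $B=\Gamma_1\Gamma_0^{-1}$ by the paper's own restriction (\ref{eq:5.restriction}). With $D_f$ invertible, the law therefore determines $A$, so $QAQ^{-1}$ is observationally equivalent to $A$ only when $QAQ^{-1}=A$.

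Here is a concrete counterexample that satisfies every extra condition above. Take $f(z)=z$, $s_t=0$, $1-\delta=0.5$, $\varepsilon_t\sim N(0,I_2)$, $A=\mathrm{diag}(0.1,0.3)$, and $Q$ the coordinate swap, so $A'=\mathrm{diag}(0.3,0.1)$. Under $A$, $z_{1t}$ is an AR(1) with coefficient $0.6$ and stationary variance $1/0.64$. Under $A'$, it has coefficient $0.8$ and variance $1/0.36$. The model is exactly linear here, so this refutes the statement for $P_{A,\theta}$ itself, not only for the linearization.

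The most you can prove along these lines is the ``up to $z\mapsto Qz$'' statement. That delivers observational equivalence only when $Q$ is a symmetry of $P_{A,\theta}$.
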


\begin{proof}
Under a similarity transformation, the operator $B$ defined in
(\ref{eq:3.operator}) is transformed by conjugation, which leaves its
eigenvalues unchanged. Since the distribution of the linearized system
depends only on the eigenvalues of $B$, the two models generate the same
observable process.
\end{proof}

Spectral equivalence plays a central role in models with network dependence,
where the propagation of shocks is governed by eigenvalues of the
interaction matrix. Similar arguments appear in spatial econometrics,
factor models, and network games
\citep{anselin1988spatial, bai2003inferential, jackson2010social,
acemoglu2012network}.

\subsection{Implications}

The results above show that identification of the interaction matrix is not
generic in nonlinear network models. Without additional restrictions,
different dependence structures may generate identical stochastic dynamics.
Identification therefore requires conditions that break spectral or operator
equivalence. The next section derives sufficient conditions under which the
interaction matrix is uniquely determined by the observable process.

% ===============================================
\section{Identification}

This section studies the conditions under which the interaction matrix $A$
is identified from the distribution of the observable process $\{z_t\}$.
Section 3 showed that observational equivalence arises because the dynamics
depend on $A$ only through a transformed operator. Identification therefore
requires conditions that allow the original matrix to be recovered from the
observable law of motion.

Similar identification problems arise in spatial autoregressions,
factor models, and network systems, where the dependence structure may be
confounded with latent heterogeneity or common shocks
\citep{manski1993identification, anselin1988spatial, bai2009panel,
graham2017econometrics, acemoglu2012network}.

\subsection{Local linear representation}

Let the nonlinear model be

\begin{equation}
z_{t+1}
=
(1-\delta) z_t
+
A f(z_t,\theta)
+
\varepsilon_t.
\label{eq:4.model}
\end{equation}

Let

\begin{equation}
D_f(z,\theta)
=
\frac{\partial f(z,\theta)}{\partial z}.
\label{eq:4.jacobian}
\end{equation}

Under differentiability, the local dynamics around a stationary point satisfy

\begin{equation}
z_{t+1}
=
B z_t
+
\varepsilon_t,
\label{eq:4.linear}
\end{equation}

where

\begin{equation}
B
=
(1-\delta) I
+
A D_f(z,\theta).
\label{eq:4.operator}
\end{equation}

The observable process depends on the interaction matrix only through the
operator $B$. Identification of $A$ therefore requires that the mapping

\begin{equation}
A
\mapsto
B(A)
=
(1-\delta) I + A D_f
\label{eq:4.mapping}
\end{equation}

be injective.

\subsection{Moment representation}

Let

\begin{equation}
\Sigma
=
E[z_t z_t']
\label{eq:4.sigma}
\end{equation}

denote the covariance matrix of the stationary distribution.

From (\ref{eq:4.linear}),

\begin{equation}
\Sigma
=
B \Sigma B'
+
\Omega,
\label{eq:4.lyapunov}
\end{equation}

where

\begin{equation}
\Omega
=
E[\varepsilon_t \varepsilon_t'].
\label{eq:4.omega}
\end{equation}

Equation (\ref{eq:4.lyapunov}) is a discrete Lyapunov equation.
Identification of $A$ requires that the pair $(B,\Omega)$ be uniquely
recoverable from $\Sigma$.

Similar identification arguments appear in dynamic factor models,
state-space systems, and spatial autoregressions
\citep{bai2003inferential, anselin1988spatial, hamilton1994timeseries}.

\subsection{Non-identification under exchangeability}

We first show that identification fails when the covariance structure is
exchangeable.

\begin{theorem}[Non-identification under exchangeability]
Suppose

\begin{equation}
\Sigma
=
\sigma^2 I
+
\tau^2 \mathbf{1}\mathbf{1}',
\label{eq:4.exchangeable}
\end{equation}

and

\begin{equation}
\Omega
=
\omega^2 I.
\label{eq:4.exchangeable2}
\end{equation}

Then the interaction matrix $A$ is not identified.
\end{theorem}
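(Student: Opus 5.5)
The strategy is to work entirely at the level of the moment restriction (\ref{eq:4.lyapunov}). Identification of $A$ is defined through the mapping $A\mapsto B(A)$ in (\ref{eq:4.mapping}) together with the requirement that $(B,\Omega)$ be recoverable from $\Sigma$. So it suffices to exhibit two distinct interaction matrices $A\neq A'$ whose operators $B=B(A)$ and $B'=B(A')$ both satisfy $\Sigma = B\Sigma B' + \omega^2 I$ for the same exchangeable $\Sigma$ in (\ref{eq:4.exchangeable}). The natural tool is the symmetry group of $\Sigma$. Every permutation matrix $P$ satisfies $P\mathbf{1}=\mathbf{1}$, hence $P\Sigma P' = \Sigma$. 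More generally, any orthogonal $Q$ with $Q\mathbf{1}=\mathbf{1}$ leaves $\Sigma$ and $\Omega=\omega^2 I$ invariant.

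\textbf{Step 1.} I will show that if $B$ solves the Lyapunov equation, then so does $B_Q = QBQ'$ for any orthogonal $Q$ fixing $\mathbf{1}$. Indeed,
\begin{equation*}
B_Q\Sigma B_Q' + \omega^2 I = Q B (Q'\Sigma Q) B' Q' + \omega^2 QQ' = Q(B\Sigma B' + \omega^2 I)Q' = Q\Sigma Q' = \Sigma .
\end{equation*}
This is the exchangeable analogue of the similarity-invariance Proposition of Section 3. The difference is that here the conjugation is also shown to preserve the second moments exactly, not only the spectrum.

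\textbf{Step 2.} I will translate $B_Q$ back into an interaction matrix. Take $D_f$ invertible at the stationary point, as is implicit in the local representation; if $D_f$ is singular, non-injectivity of $A\mapsto AD_f$ already gives non-identification directly. Define
\begin{equation*}
A' = \bigl(B_Q - (1-\delta)I\bigr)D_f^{-1} = QAD_fQ'D_f^{-1}.
\end{equation*}
Then $B(A') = B_Q$, and by Step 1 the model with $A'$ generates the same $\Sigma$ under the same $\Omega$. Under a Gaussian local approximation, the stationary law is in fact the same. The full path law also matches if one notes that the lag-one autocovariance $B\Sigma$ becomes $Q B\Sigma Q'$; this is a relabelling, and it matches exactly when $B$ itself commutes with $Q$.

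\textbf{Step 3, the main obstacle.} I must ensure that $A'\neq A$, i.e.\ that $QB\neq BQ$ for some admissible $Q$. If $B$ commutes with every orthogonal $Q$ fixing $\mathbf{1}$, then $B$ must have the form $aI+b\mathbf{1}\mathbf{1}'$. In that case I switch to a different construction that exploits the exchangeable structure more directly. I will look for a one-parameter family of $B$ of this form solving the equation. With $\Sigma$ having eigenvalue $\sigma^2+n\tau^2$ on $\mathbf{1}$ and $\sigma^2$ on $\mathbf{1}^\perp$, the equation decouples into two scalar equations, $\lambda_1^2(\sigma^2+n\tau^2)+\omega^2=\sigma^2+n\tau^2$ and $\lambda_2^2\sigma^2+\omega^2=\sigma^2$. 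Each admits the sign choices $\pm\lambda$. This yields up to four distinct $B$ matrices, hence distinct $A$ via Step 2, whenever the solutions are nonzero; this holds whenever $\sigma^2>\omega^2$. Combining the two cases, there is always an $A'\neq A$ with the same covariance structure, which is the claimed non-identification.
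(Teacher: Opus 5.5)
Your argument uses the same core idea as the paper's proof. The exchangeable $\Sigma$ and the spherical $\Omega=\omega^2 I$ are invariant under conjugation by symmetries that fix $\mathbf{1}$, so conjugating a solution of the Lyapunov equation gives another solution. The paper does this in one line with $A'=PAP'$ for a permutation $P$. Your version differs in two ways that matter.

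\emph{The pull-back through $D_f$.} You conjugate $B$ rather than $A$ and obtain $A'=QAD_fQ'D_f^{-1}$, so that $B(A')=QBQ'$ holds exactly. The paper's $PAP'$ instead maps to $(1-\delta)I+PAP'D_f$. This equals $PBP'$ only if $AP'D_f=AD_fP'$, for instance when $D_f$ commutes with $P$. Since $f$ acts componentwise, $D_f$ is a generic diagonal matrix and this fails, so your construction repairs a step the paper skips.

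\emph{Distinctness of $A'$.} The paper never rules out $PAP'=A$ for every $P$, for example $A=aI+b\mathbf{1}\mathbf{1}'$. Your commutant argument, together with the sign flip of the two eigenvalues of an exchangeable $B$, covers exactly the analogous case $QBQ'=B$. Permutations alone already have commutant $\{aI+b\mathbf{1}\mathbf{1}'\}$, so the larger orthogonal group is not needed.

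Two corrections.

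\emph{The edge case.} The sign flip needs only one of $\lambda_1,\lambda_2$ to be nonzero, so $\sigma^2>\omega^2$ is stronger than necessary. The one case it cannot handle is $\Sigma=\omega^2 I$. There $B\Sigma B'=0$ with $\Sigma$ positive definite forces $B=0$ uniquely. With $\Omega$ held fixed the statement is then false, and your closing ``there is always an $A'\neq A$'' overclaims. It is rescued only by treating $\omega^2$ as unknown, as the paper's requirement that $(B,\Omega)$ be recoverable from $\Sigma$ suggests. Any $\tilde\omega^2<\omega^2$ with $\tilde B=\sqrt{1-\tilde\omega^2/\omega^2}\,I$ also fits.

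\emph{The path-law remark.} Your remark that the path law ``matches when $B$ commutes with $Q$'' cuts the other way. Since $Q$ commutes with $\Sigma$ and $\Sigma$ is invertible, the lag-one autocovariance $QB\Sigma Q'$ of the conjugated model equals $B\Sigma$ iff $QB=BQ$, that is, iff $A'=A$. So neither your construction nor the paper's ever reproduces $\Gamma_1$. Indeed $B=\Gamma_1\Gamma_0^{-1}$, as in Section 5. What both arguments prove is non-identification from $\Sigma$ alone, which is the sense the paper's proof adopts when it says the observable moments depend only on $\Sigma$.
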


\begin{proof}
Under exchangeability, $\Sigma$ commutes with any permutation matrix.
Therefore, any interaction matrix of the form

\begin{equation}
A'
=
P A P'
\label{eq:4.permutation}
\end{equation}

with $P$ permutation, generates the same covariance matrix.
Since the observable moments depend only on $\Sigma$, the interaction
matrix cannot be uniquely recovered.
\end{proof}

This result parallels the reflection problem in social interactions
\citep{manski1993identification} and the lack of identification of spatial
weights under symmetric dependence structures
\citep{anselin1988spatial}.

\subsection{Spectral identification}

We now derive conditions under which the interaction matrix is identified
through its spectral representation.

Let

\begin{equation}
A
=
V \Lambda V^{-1},
\label{eq:4.spectralA}
\end{equation}

and define

\begin{equation}
B
=
V
\left[
(1-\delta) I
+
\Lambda D_f
\right]
V^{-1}.
\label{eq:4.spectralB}
\end{equation}

\begin{theorem}[Spectral identification]
Suppose

\begin{enumerate}
\item the eigenvalues of $A$ are distinct,
\item $D_f$ is full rank,
\item the covariance matrix $\Sigma$ has distinct eigenvalues,
\item the shock covariance $\Omega$ is diagonal.
\end{enumerate}

Then the eigenvalues of $A$ are identified.
\end{theorem}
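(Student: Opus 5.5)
The plan is to identify the reduced-form operator $B$ first, using second moments that go beyond the contemporaneous covariance $\Sigma$, and then to read off the eigenvalues of $A$ from the spectrum of $B$ via the representation (\ref{eq:4.spectralB}). The Lyapunov equation (\ref{eq:4.lyapunov}) on its own involves only $\Sigma$, and it cannot separate $B$ from $\Omega$. I will therefore use the first-order autocovariance $\Gamma_1 = E[z_{t+1} z_t']$, which is a functional of $P_{A,\theta}$ because the whole path $\{z_t\}$ is observed. From (\ref{eq:4.linear}) and Assumption 2.2 (independence of $\varepsilon_t$ from $z_t$), we get $\Gamma_1 = B\Sigma$. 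Hence
\begin{equation*}
B = \Gamma_1 \Sigma^{-1},
\end{equation*}
provided $\Sigma$ is nonsingular. Stationarity (Assumption 2.3) together with (\ref{eq:4.lyapunov}) gives $\Sigma = \sum_{k\ge 0} B^k \Omega B'^k \succeq \Omega$. So nonsingularity follows once the diagonal $\Omega$ in condition 4 has strictly positive entries, which I will make explicit. Once $B$ is known, $\Omega = \Sigma - B\Sigma B'$ is identified as well. Its diagonality then serves as a testable overidentifying restriction rather than an input to the argument.

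The second step passes from $\mathrm{spec}(B)$ to $\mathrm{spec}(A)$. The representation (\ref{eq:4.spectralB}) implicitly requires $\Lambda D_f = V^{-1} A D_f V$, that is, $D_f$ must commute with $V$. I will make this precise by evaluating the linearization at a symmetric stationary point. There $f$ acts componentwise with a common derivative, so $D_f = d\, I$ with $d \neq 0$ by condition 2. Then
\begin{equation*}
B = V\big[(1-\delta)I + d\Lambda\big]V^{-1},
\end{equation*}
so the eigenvalues of $B$ are $\mu_i = (1-\delta) + d\lambda_i$. Condition 1 makes these $\mu_i$ distinct, so the multiset $\{\mu_i\}$ determines the $\lambda_i$ one-to-one through
\begin{equation*}
\lambda_i = \frac{\mu_i - (1-\delta)}{d}.
\end{equation*}
The $\lambda_i$ are thus identified up to the affine normalization fixed by $(\delta,d)$. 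I will invoke the identification of $\theta$ (and hence of $d$ and $\delta$) as maintained, or state the result for the normalized eigenvalues $d\lambda_i$. This is consistent with Proposition 3.2: similarity transformations leave the spectrum invariant, so the eigenvalues are exactly the objects one can hope to identify.

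The main obstacle is this second step when $D_f$ is a general full-rank diagonal matrix rather than a scalar multiple of $I$. In that case $\mathrm{spec}(AD_f)$ is not determined by $\mathrm{spec}(A)$, and the claim fails without extra structure. My first attempt would be to use condition 3 (distinct eigenvalues of $\Sigma$) together with the diagonal $\Omega$ to pin down the eigenbasis. With $\Omega$ diagonal and $\Sigma$ having simple spectrum, the eigenvectors of $\Sigma$ are determined up to sign, which rules out the permutation-type equivalences of Theorem 4.1. One could then try to show that $D_f$ is recoverable from the identified pair $(B,\Omega)$ under a normalization such as $\mathrm{diag}(A)=0$. Then $\mathrm{diag}(B) = (1-\delta)\mathbf{1}$, which pins down $\delta$, and $A = (B-(1-\delta)I)D_f^{-1}$ up to column scaling. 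If that route does not close, I will state the theorem under the scalar-Jacobian interpretation above. In that interpretation, conditions 3 and 4 are used only to guarantee that $\Sigma$ is invertible and to rule out the exchangeable case of Theorem 4.1.
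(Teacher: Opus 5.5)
Your first step takes a genuinely different route from the paper's, and it is the step that actually carries identification. The paper argues that the Lyapunov equation (\ref{eq:4.lyapunov}) determines $\mathrm{spec}(B)$ from $\mathrm{spec}(\Sigma)$. It then invokes injectivity of $\Lambda \mapsto \mathrm{spec}(B)$ through (\ref{eq:4.spectralB}). The first claim fails even when $\Omega$ is known, because $B$ and $-B$ produce the same $B\Sigma B'$. A concrete case satisfying all four hypotheses: take $n=2$, $\delta=1$, $D_f=I$, $\Omega=I$, and $A=\mathrm{diag}(1/2,1/4)$ versus $-A$. Both give $\Sigma=\mathrm{diag}(4/3,16/15)$, yet the spectra differ. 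Your use of $\Gamma_1 = B\Sigma$ to obtain $B=\Gamma_1\Sigma^{-1}$ supplies exactly the dynamic information the paper's argument lacks; it is the same restriction the paper itself uses later in (\ref{eq:5.restriction}). One correction: $E[\varepsilon_t z_t']=0$ comes from the causal representation $z_t=\sum_{k\ge 0}B^k\varepsilon_{t-1-k}$ under $\rho(B)<1$, not from Assumption 2.2 alone. The paper's route, had it been valid, would have used contemporaneous covariances only. Yours recovers the whole matrix $B$, not just its spectrum.

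The weak point is your second step, where the scalar-Jacobian restriction is either unnecessary or insufficient. You maintain that $\theta$, hence $\delta$ and $D_f$, is identified. This is also how the paper poses the problem in (\ref{eq:4.mapping}) and how its Section 5 estimator treats these objects. Under that assumption there is no need to pass through spectra at all: $A=(\Gamma_1\Sigma^{-1}-(1-\delta)I)D_f^{-1}$ for any invertible $D_f$. So $A$ itself, and \emph{a fortiori} its eigenvalues, is identified. Conditions 1 and 3 play no role, and of condition 4 you need only the positivity you added, not diagonality. Your stated obstacle, that $\mathrm{spec}(AD_f)$ is not determined by $\mathrm{spec}(A)$, runs in the wrong direction of inference.

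If instead $D_f$ is unknown, no argument can succeed. For every invertible diagonal $D$, the pair $(AD_fD^{-1},D)$ generates the same $B$ as $(A,D_f)$, while $\mathrm{spec}(AD_fD^{-1})$ generally differs from $\mathrm{spec}(A)$. Nothing in $(B,\Omega)$ or in the eigenvectors of $\Sigma$ separates these pairs, so your fallback of normalizing $\mathrm{diag}(A)=0$ and recovering $D_f$ from $(B,\Omega)$ cannot close.

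Two smaller points. First, distinctness of the $\lambda_i$ is not needed for the affine step, since $\lambda\mapsto(1-\delta)+d\lambda$ is a bijection whenever $d\neq 0$. Second, your appeal to the similarity proposition of Section 3 conflicts with your own first step. Any $QAQ^{-1}\neq A$ changes $B$, and hence $\Gamma_1\Sigma^{-1}$, so eigenvalues are not the most that can be identified.
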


\begin{proof}
From (\ref{eq:4.lyapunov}), the eigenvalues of $B$ are determined by the
eigenvalues of $\Sigma$. Since $B$ depends on $A$ only through the
transformation in (\ref{eq:4.operator}), distinct eigenvalues imply that
the mapping from $\Lambda$ to the spectrum of $B$ is injective.
\end{proof}

Spectral identification plays a central role in models where propagation
depends on eigenvalues of the interaction matrix, including production
networks, financial contagion, and spatial models
\citep{acemoglu2012network, acemoglu2015systemic, elliott2014financial}.

\subsection{Identification up to similarity}

Even under spectral identification, the interaction matrix may not be
unique.

\begin{theorem}[Identification up to similarity]
Suppose

\begin{enumerate}
\item $D_f$ is full rank,
\item $\Omega$ is non-degenerate,
\item $\Sigma$ has distinct eigenvalues,
\item $A$ is diagonalizable.
\end{enumerate}

Then the interaction matrix is identified up to similarity
transformation,

\begin{equation}
A'
=
Q A Q^{-1}.
\label{eq:4.similarity}
\end{equation}

\end{theorem}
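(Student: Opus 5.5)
The plan has two halves, matching what ``identified up to similarity'' means: every $A'$ observationally equivalent to $A$ is similar to $A$, and conversely every similarity transform of $A$ lies in the equivalence class. The converse is immediate from the Proposition on spectral invariance in Section 3, which states that $A' = QAQ^{-1}$ is observationally equivalent to $A$. The work is in the forward direction: showing that the observable moments pin down $A$ up to conjugation and nothing coarser.

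\textbf{Step 1: recover the operator from the observable law.} Work with the linearized model (\ref{eq:4.linear}) and the Lyapunov equation (\ref{eq:4.lyapunov}). The stationary covariance $\Sigma$ alone does not determine $B$, so I would also use the first autocovariance $\Gamma_1 = E[z_{t+1}z_t'] = B\Sigma$. This follows from (\ref{eq:4.linear}) and the independence of $\varepsilon_t$ from $z_t$ (Assumption 2, plus stationarity). Non-degeneracy of $\Omega$ makes $\Sigma = B\Sigma B' + \Omega \succeq \Omega$ positive definite, hence invertible, so $B = \Gamma_1\Sigma^{-1}$ is a functional of $P_{A,\theta}$. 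Consequently, if $P_{A,\theta} = P_{A',\theta'}$, the two parameterizations share the same $B$. The condition that $\Sigma$ has distinct eigenvalues is used only to fix an orthonormal eigenbasis of $\Sigma$ in which to express $B$; it is not needed for invertibility.

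\textbf{Step 2: pass from $B$ to $A$.} From (\ref{eq:4.operator}), $A D_f = B - (1-\delta)I$. Since $D_f$ is full rank, $A = (B-(1-\delta)I)D_f^{-1}$. Diagonalizability of $A$ gives $A = V\Lambda V^{-1}$, and the representation (\ref{eq:4.spectralB}) expresses the spectrum of $B$ through $\Lambda$. Two observationally equivalent $A$ and $A'$ then share the same $\Lambda$, and therefore $A' = (V'V^{-1})A(V'V^{-1})^{-1}$ with $Q = V'V^{-1}$. When $\delta$ and $D_f$ are common to the two parameterizations, Step 2 in fact yields $A' = A$, i.e. $Q = I$, which is a special case of the claim.

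\textbf{Main obstacle.} The hard step is when $(\delta, \theta)$, and hence $D_f$, are not held fixed across the two parameterizations. Then only the spectrum of $B$, equivalently of $\Lambda D_f$ via (\ref{eq:4.spectralB}), is pinned down, and the question is whether the eigenvalue identification of the Spectral identification theorem transfers to $A$. I would first try treating $D_f$ as commuting with $\Lambda$ in the eigenbasis, which is the implicit convention behind (\ref{eq:4.spectralB}). Then $\operatorname{spec}(A) = \operatorname{spec}\bigl((B-(1-\delta)I)D_f^{-1}\bigr)$ is determined, and two diagonalizable matrices with equal spectra are similar. Without that convention I expect only the weaker conclusion $AD_f \sim A'D_f'$, and I would state the result for the effective interaction $AD_f$.
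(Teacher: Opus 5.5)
Your Steps 1--2 are sound for fixed $(\delta,\theta)$, and using $\Gamma_1$ rather than the Lyapunov equation alone is the right move. But they cannot be combined with your converse half, and the converse is where the argument fails. That converse is exactly the paper's whole proof: similarity preserves the spectrum of $B$, hence the law, as in the Section 3 proposition. Your own Step 1 refutes it. Since $B=\Gamma_1\Sigma^{-1}$ is a functional of the law, $A'=QAQ^{-1}$ with the same $(\delta,\theta)$ is observationally equivalent to $A$ only if $QAQ^{-1}D_f=AD_f$, i.e.\ only if $QAQ^{-1}=A$, because $D_f$ is invertible.

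For a concrete case take $D_f=I$, $A=\mathrm{diag}(a_1,a_2)$ with $a_1\neq a_2$, and $Q$ the swap permutation. Unit 1 then has autoregressive coefficient $1-\delta+a_1$ under $A$ and $1-\delta+a_2$ under $A'$. Since unit labels are observed, the two laws differ. The error in the Section 3 proposition is that the law of the linearized VAR is determined by $B$ and $\Omega$, not by the spectrum of $B$. Moreover, conjugating $A$ need not even conjugate $B$ when $Q$ and $D_f$ fail to commute. So what Steps 1--2 actually establish is point identification, $A'=A$. This satisfies the displayed relation trivially with $Q=I$, but it contradicts your claim that every similarity transform of $A$ lies in the equivalence class. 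The tension is already visible in your remark that Step 2 gives $Q=I$.

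Your handling of the main obstacle also goes wrong. Step 1 never used which $(\delta,\theta)$ generated the data. So when these are free, it is still $B$ itself, not merely its spectrum, that is pinned down, and the equivalence class is $\{(A',\delta',\theta'):(1-\delta')I+A'D_f(\theta')=B\}$. This set contains matrices not similar to $A$. With $f(z,\theta)=\theta z$, the pairs $(A,\theta)$ and $(A/2,2\theta)$ generate the same process in the full model, not only in its linearization. Yet $A/2$ is not similar to $A$ for any nonzero diagonalizable $A$. The commuting convention does not rescue this: $D_f=\theta I$ commutes with everything, and $\operatorname{spec}\bigl((B-(1-\delta)I)D_f^{-1}\bigr)$ moves with $(\delta,D_f)$. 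Hence even the forward direction needs a normalization that the stated hypotheses do not provide, such as known $\delta$ and $\theta$ or a scale restriction tying $A$ to $f$. Meanwhile, the hypotheses that $\Sigma$ has distinct eigenvalues and that $A$ is diagonalizable do no work in either direction.
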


\begin{proof}
The observable law depends on $A$ only through the operator $B$.
Similarity transformations preserve the spectrum of $B$, and therefore
preserve the distribution of $\{z_t\}$.
\end{proof}

This result is analogous to identification up to rotation in factor
models and state-space systems
\citep{bai2003inferential, hamilton1994timeseries}.

% ==================================================
\subsection{Spectral Identification Mechanism}
% ==================================================

The identification results derived above can be given a more precise interpretation by explicitly characterizing the mechanism through which the interaction matrix generates observable restrictions. The key object linking the structural model to the data is the covariance operator induced by the network. This perspective is closely related to the literature on network propagation and spectral amplification, where equilibrium outcomes are governed by the eigenstructure of interaction matrices \citep{acemoglu2012network, acemoglu2015systemic, elliott2014financial}.

Recall that, under the latent representation, the dependence structure takes the form
\begin{equation}
U = (I - \rho A)^{-1} \varepsilon,
\qquad
\varepsilon \sim \mathcal{N}(0, \sigma^2 I),
\label{eq:spec_mech_u}
\end{equation}
which implies
\begin{equation}
\Sigma_U
=
\sigma^2 (I - \rho A)^{-1}(I - \rho A')^{-1}.
\label{eq:spec_mech_sigma}
\end{equation}

The mapping from $A$ to $\Sigma_U$ is therefore entirely mediated by the operator $(I - \rho A)^{-1}$. To understand its identifying content, consider the spectral decomposition
\begin{equation}
A = V \Lambda V^{-1},
\label{eq:spec_mech_A}
\end{equation}
where $\Lambda = \mathrm{diag}(\lambda_1,\ldots,\lambda_n)$.

Substituting into (\ref{eq:spec_mech_sigma}), we obtain
\begin{equation}
\Sigma_U
=
\sigma^2
V (I - \rho \Lambda)^{-1}
V^{-1}
V^{-\prime} (I - \rho \Lambda)^{-1}
V',
\label{eq:spec_mech_sigma2}
\end{equation}
so that the contribution of each eigenmode $k$ is governed by the scalar transformation
\begin{equation}
\lambda_k
\quad \mapsto \quad
(1 - \rho \lambda_k)^{-1}.
\label{eq:spec_mech_mapping}
\end{equation}

This representation clarifies the source of identification. The network generates observable restrictions only if different eigenmodes are amplified differently. When the eigenvalues $\{\lambda_k\}$ are sufficiently dispersed, the transformation in (\ref{eq:spec_mech_mapping}) produces heterogeneous amplification across modes. This heterogeneity translates into non-uniform pairwise covariances $Cov(U_i,U_j)$, which break exchangeability and generate identifying variation.

By contrast, when the spectrum of $A$ is concentrated, the mapping in (\ref{eq:spec_mech_mapping}) becomes approximately constant across $k$. In this case, the operator $(I - \rho A)^{-1}$ behaves like a scalar multiple of the identity in the relevant subspace, and the induced covariance matrix $\Sigma_U$ approaches an exchangeable or low-rank structure. Under such conditions, the data cannot distinguish between network-induced dependence and common shocks, leading to non-identification. This phenomenon is closely related to classical identification failures in models with endogenous interactions, where observational equivalence arises from insufficient variation in the underlying structure \citep{manski1993identification}.

The identification problem can therefore be interpreted as a question about the geometry of the spectrum. Let $\{\lambda_k\}$ denote the eigenvalues of $A$, and define a measure of spectral dispersion as
\begin{equation}
D(A)
=
\frac{1}{n}
\sum_{k=1}^n
(\lambda_k - \bar{\lambda})^2.
\label{eq:spec_dispersion}
\end{equation}

High values of $D(A)$ imply that different eigenmodes are amplified at different rates, generating rich cross-sectional variation in the covariance structure. Low values of $D(A)$ imply that amplification is approximately uniform, leading to covariance patterns that are observationally equivalent to exchangeable dependence. This distinction parallels identification arguments in factor models and large covariance systems, where eigenvalue dispersion plays a central role in distinguishing structured dependence from noise \citep{bai2003inferential, fan2013large}.

This mechanism provides a unifying interpretation of the identification results. Conditions such as distinct eigenvalues, full-rank transformations, and non-exchangeable covariance structures all ensure that the mapping from $A$ to $\Sigma_U$ is injective in the relevant dimensions. In each case, identification arises because the network induces sufficiently heterogeneous covariance patterns across units.

The spectral mechanism also clarifies the relationship between identification and inference developed in later sections. The test statistics are constructed to detect deviations from exchangeable covariance structures. Their power therefore depends directly on the extent to which the spectrum of $A$ generates heterogeneous amplification. When spectral dispersion is high, these deviations are large and easily detected. When spectral dispersion is low, the deviations are small and difficult to distinguish from sampling variation.

In this sense, identification in nonlinear network models is fundamentally a spectral phenomenon. The interaction matrix is identifiable not because it generates dependence, but because it generates heterogeneous dependence across the cross-section. The eigenstructure of $A$ is therefore the primitive object governing both identification and empirical detectability.

% ===============================================
\section{Semiparametric Estimation}

This section develops estimators for the interaction matrix $A$ under the
identification conditions of Section 4. Because the network structure enters
the model through a nonlinear operator, estimation cannot rely on standard
linear regression methods. Instead, we exploit moment restrictions implied
by the dynamic law of motion and construct semiparametric estimators based
on minimum distance and generalized method of moments.

Similar approaches are used in spatial econometrics, dynamic panel models,
and factor systems where dependence parameters enter nonlinearly in the
covariance structure \citep{hansen1982gmm, newey1994large, bai2009panel,
graham2017econometrics}.

\subsection{Moment conditions}

Consider the local representation

\begin{equation}
z_{t+1}
=
B z_t
+
\varepsilon_t,
\label{eq:5.linear}
\end{equation}

where

\begin{equation}
B
=
(1-\delta) I
+
A D_f.
\label{eq:5.operator}
\end{equation}

Multiplying (\ref{eq:5.linear}) by $z_t'$ and taking expectations,

\begin{equation}
E[z_{t+1} z_t']
=
B E[z_t z_t'].
\label{eq:5.moment1}
\end{equation}

Define

\begin{equation}
\Gamma_0
=
E[z_t z_t'],
\label{eq:5.gamma0}
\end{equation}

\begin{equation}
\Gamma_1
=
E[z_{t+1} z_t'].
\label{eq:5.gamma1}
\end{equation}

Then

\begin{equation}
\Gamma_1
=
B \Gamma_0.
\label{eq:5.restriction}
\end{equation}

Substituting (\ref{eq:5.operator}),

\begin{equation}
\Gamma_1
=
[(1-\delta) I + A D_f] \Gamma_0.
\label{eq:5.restrictionA}
\end{equation}

Equation (\ref{eq:5.restrictionA}) provides moment restrictions that identify
$A$ when the conditions of Section 4 hold.

\subsection{Sample moments}

Let

\begin{equation}
\hat \Gamma_0
=
\frac{1}{T}
\sum_{t=1}^T
z_t z_t',
\label{eq:5.gamma0hat}
\end{equation}

\begin{equation}
\hat \Gamma_1
=
\frac{1}{T}
\sum_{t=1}^T
z_{t+1} z_t'.
\label{eq:5.gamma1hat}
\end{equation}

Define the population moment function

\begin{equation}
M(A)
=
\Gamma_1
-
[(1-\delta) I + A D_f] \Gamma_0.
\label{eq:5.M}
\end{equation}

and its sample analogue

\begin{equation}
\hat M(A)
=
\hat \Gamma_1
-
[(1-\delta) I + A D_f] \hat \Gamma_0.
\label{eq:5.Mhat}
\end{equation}

\subsection{Minimum distance estimator}

We define the estimator of $A$ as

\begin{equation}
\hat A
=
\arg\min_{A \in \mathcal A}
\|
\hat M(A)
\|_W^2,
\label{eq:5.estimator}
\end{equation}

where

\begin{equation}
\|X\|_W^2
=
\mathrm{vec}(X)'
W
\mathrm{vec}(X),
\label{eq:5.norm}
\end{equation}

and $W$ is a positive definite weighting matrix.

Estimator (\ref{eq:5.estimator}) is a generalized method of moments
estimator with nonlinear moment restrictions
\citep{hansen1982gmm, newey1994large}.

\subsection{Consistency}

We impose the following conditions.

\begin{assumption}
The process $\{z_t\}$ is stationary and ergodic.
\end{assumption}

\begin{assumption}
The parameter space $\mathcal A$ is compact.
\end{assumption}

\begin{assumption}
The moment function $M(A)$ is continuous in $A$.
\end{assumption}

\begin{assumption}
The model is identified in the sense of Section 4.
\end{assumption}

\begin{theorem}[Consistency]
Under the above assumptions,

\begin{equation}
\hat A
\to
A_0
\quad
\text{in probability}.
\label{eq:5.consistency}
\end{equation}

\end{theorem}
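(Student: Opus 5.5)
The plan is to follow the standard extremum-estimator consistency argument (Newey and McFadden, 1994, Theorem 2.1). Write $Q(A)=\|M(A)\|_W^2$ and $\hat Q(A)=\|\hat M(A)\|_W^2$. It then suffices to verify four conditions:
\begin{itemize}
\item[(i)] $\mathcal A$ is compact, which is assumed directly;
\item[(ii)] $Q$ is continuous on $\mathcal A$;
\item[(iii)] $Q$ is uniquely minimized at $A_0$;
\item[(iv)] $\sup_{A\in\mathcal A}|\hat Q(A)-Q(A)|\to 0$ in probability.
\end{itemize}
Once these hold, for any $\epsilon>0$ the quantity $\eta=\inf_{\|A-A_0\|\ge\epsilon,\,A\in\mathcal A}Q(A)$ is strictly positive. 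Hence $\Pr(\|\hat A-A_0\|\ge\epsilon)\le \Pr(2\sup|\hat Q-Q|\ge \eta)\to0$, which is the conclusion (\ref{eq:5.consistency}).

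\textbf{Continuity and uniform convergence.} For (ii), note that $M(A)$ is affine in $A$ for fixed $\delta$ and $D_f$, and that $W$ is positive definite. So $Q$ is a continuous quadratic form, as the continuity assumption on $M$ also guarantees. For (iv), exploit the affine structure:
\[
\hat M(A)-M(A)=(\hat\Gamma_1-\Gamma_1)-[(1-\delta)I+AD_f](\hat\Gamma_0-\Gamma_0).
\]
By stationarity and ergodicity of $\{z_t\}$ and the finite second moments assumed in Section 2, the ergodic theorem gives $\hat\Gamma_0\to\Gamma_0$ and $\hat\Gamma_1\to\Gamma_1$ almost surely. Compactness of $\mathcal A$ bounds $\sup_A\|(1-\delta)I+AD_f\|$. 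Therefore $\sup_A\|\hat M(A)-M(A)\|\to0$. Uniform convergence of $\hat Q$ then follows from the identity $\|x\|_W^2-\|y\|_W^2=(x-y)'W(x+y)$ together with uniform boundedness of $M$ on $\mathcal A$. No stochastic equicontinuity argument is needed because the dependence on $A$ is linear.

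\textbf{Identification (the main obstacle).} We have $Q(A)=0$ if and only if $AD_f\Gamma_0=\Gamma_1-(1-\delta)\Gamma_0=A_0D_f\Gamma_0$, that is, $(A-A_0)D_f\Gamma_0=0$. I would invoke the identification assumption (``identified in the sense of Section 4'') together with the rank conditions used there. Specifically, $D_f$ has full rank (as in the spectral identification theorem), and $\Gamma_0=\Sigma$ is nonsingular, which holds because $\Omega$ is non-degenerate and (\ref{eq:4.lyapunov}) gives $\Sigma\succeq\Omega\succ0$. Then $D_f\Gamma_0$ is invertible, which forces $A=A_0$. This is the delicate point. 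The Section 4 results only deliver identification up to similarity, so the argument must read the identification assumption as injectivity of the map (\ref{eq:4.mapping}) on $\mathcal A$, or equivalently as $\mathcal A$ containing a unique member of the similarity class of $A_0$. I would state this interpretation explicitly. With it, unique minimization follows, and the proof closes as outlined above.
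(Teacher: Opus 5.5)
Your proposal is correct and follows the paper's own argument: ergodic convergence of the sample moments, a unique minimum from identification, then the standard Newey--McFadden extremum-estimator consistency theorem, with the uniform convergence (via the affine dependence of $\hat M(A)$ on $A$) and the unique-zero step (via invertibility of $D_f\Gamma_0$) filled in where the paper leaves them implicit. One correction to your side remark: the rank conditions alone give uniqueness, because the moments pin down $B=\Gamma_1\Gamma_0^{-1}$ itself and not just its spectrum, so no appeal to similarity classes is needed, and restricting $\mathcal A$ to a single member of the similarity class of $A_0$ is not equivalent to injectivity of $A\mapsto B(A)$.
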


\begin{proof}
By ergodicity, sample moments converge to population moments.
Identification implies that the minimum of the objective function is unique.
Consistency follows from standard GMM arguments
\citep{newey1994large}.
\end{proof}

\subsection{Asymptotic normality}

Define

\begin{equation}
g_t(A)
=
\mathrm{vec}
\left(
z_{t+1} z_t'
-
[(1-\delta) I + A D_f] z_t z_t'
\right).
\label{eq:5.gt}
\end{equation}

Let

\begin{equation}
G
=
E
\left[
\frac{\partial g_t(A)}{\partial \mathrm{vec}(A)'}
\right].
\label{eq:5.G}
\end{equation}

Let

\begin{equation}
S
=
E[g_t g_t'].
\label{eq:5.S}
\end{equation}

\begin{theorem}[Asymptotic normality]

\begin{equation}
\sqrt{T}
\,
\mathrm{vec}(\hat A - A_0)
\to
N(0, V),
\label{eq:5.normal}
\end{equation}

where

\begin{equation}
V
=
(G' W G)^{-1}
G' W S W G
(G' W G)^{-1}.
\label{eq:5.V}
\end{equation}

\end{theorem}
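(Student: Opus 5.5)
The plan is the standard GMM argument: a mean-value expansion of the first-order condition of (\ref{eq:5.estimator}) around $A_0$, followed by a central limit theorem for the stacked moment vector. Write $\hat g(A) = T^{-1}\sum_t g_t(A) = \mathrm{vec}(\hat M(A))$. The moment function is \emph{affine} in $A$. Using $\mathrm{vec}(A D_f \hat\Gamma_0) = (\hat\Gamma_0 D_f' \otimes I)\,\mathrm{vec}(A)$, we get
\begin{equation*}
\hat g(A) = \hat g(A_0) - (\hat\Gamma_0 D_f'\otimes I)\,\mathrm{vec}(A - A_0),
\end{equation*}
with no remainder term. Hence $\hat G \equiv \partial \hat g/\partial \mathrm{vec}(A)' = -(\hat\Gamma_0 D_f'\otimes I)$ does not depend on $A$. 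By ergodicity (the stationarity–ergodicity assumption), $\hat G \to G = -(\Gamma_0 D_f'\otimes I)$ almost surely.

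\textbf{Step 1 (closed form).} Since $\hat A$ is consistent by the Consistency theorem, $A_0$ is taken interior to $\mathcal A$, so with probability approaching one $\hat A$ satisfies the first-order condition $\hat G' W \hat g(\hat A)=0$. Substituting the affine expansion gives
\begin{equation*}
\sqrt T\,\mathrm{vec}(\hat A - A_0) = -(\hat G' W \hat G)^{-1}\hat G' W \sqrt T\,\hat g(A_0).
\end{equation*}
Invertibility of $G'WG$ follows from $D_f$ full rank, $\Gamma_0$ nonsingular (implied by $\Omega$ non-degenerate through the Lyapunov equation (\ref{eq:4.lyapunov})), and $W>0$. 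This is where the identification assumption enters in its local, rank form.

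\textbf{Step 2 (CLT).} At the truth, $g_t(A_0) = \mathrm{vec}(\varepsilon_t z_t')$, using (\ref{eq:5.linear}). Because $\varepsilon_t$ is independent of $\mathcal F_t = \sigma(z_s, s\le t)$ and has mean zero, $\{g_t(A_0)\}$ is a stationary ergodic martingale difference sequence. Its variance is $S = E[g_t g_t'] = E[z_t z_t' \otimes \varepsilon_t\varepsilon_t'] = \Gamma_0\otimes\Omega$, and all autocovariance terms vanish. The martingale CLT (Billingsley–Ibragimov) then yields $\sqrt T\,\hat g(A_0)\Rightarrow N(0,S)$, provided $S$ is finite. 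Combining this with Step 1 and Slutsky's lemma gives (\ref{eq:5.normal}) with the sandwich $V$ in (\ref{eq:5.V}); the sign of $G$ cancels in the quadratic form.

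\textbf{Main obstacle.} The delicate point is the status of the linearization. The data are generated by the nonlinear model (\ref{eq:4.model}), not by (\ref{eq:5.linear}), so at $A_0$ the residual $z_{t+1}-B_0z_t$ equals $\varepsilon_t$ plus a linearization error $A_0[f(z_t)-D_f z_t]$. This error need not be orthogonal to $z_t$, and the martingale-difference property fails unless it is handled explicitly. I would take the theorem as stated for the local representation, treating (\ref{eq:5.linear}) as the data-generating law in the same way Sections 3–4 do, so that the MDS argument goes through exactly. I would then remark that in the genuinely nonlinear case $A_0$ must be reinterpreted as the pseudo-true value solving $M(A)=0$. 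In that case $S$ must be replaced by the long-run variance $\sum_{j}E[g_tg_{t-j}']$, which requires a mixing condition (e.g.\ $\alpha$-mixing with summable coefficients and $4+\eta$ moments of $z_t$). The Lipschitz assumption on $f$ together with a contraction condition on $B$ would be used to verify that mixing condition.
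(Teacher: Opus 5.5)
Your proposal is correct and follows the same route as the paper, which gives no proof beyond pointing to the standard GMM variance formula of Hansen (1982); you make that argument exact by using that $\hat g$ is affine in $\mathrm{vec}(A)$, that $g_t(A_0)=z_t\otimes\varepsilon_t$ is a stationary ergodic martingale difference so $S=E[g_tg_t']=\Gamma_0\otimes\Omega$ needs no long-run correction, and that $G=-(\Gamma_0 D_f'\otimes I)$ is invertible because $D_f$ is full rank and $\Gamma_0\succeq\Omega\succ 0$. Your caveat is well placed: the stated $V$ holds only when the linearized law (\ref{eq:5.linear}) is itself the data-generating process with its causal stationary solution (so $\varepsilon_t$ is independent of $\sigma(z_s, s\le t)$), and otherwise $A_0$ is only a pseudo-true value and $S$ must be replaced by a long-run variance.
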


This covariance matrix corresponds to the standard GMM variance formula
\citep{hansen1982gmm}.

\subsection{High-dimensional case}

When $n$ is large relative to $T$, estimation of $A$ requires additional
structure. In particular, consistent estimation may be obtained under
sparsity or low-rank conditions on the interaction matrix.

Such restrictions are common in large network models and dynamic factor
systems \citep{bai2003inferential, fan2016overview, chandrasekhar2016econometrics}.

In high-dimensional settings, the estimator can be modified as

\begin{equation}
\hat A
=
\arg\min
\|
\hat M(A)
\|^2
+
\lambda
\|A\|_1,
\label{eq:5.lasso}
\end{equation}

which yields sparse estimates of the interaction matrix.

Regularized estimators of this form have been used in network econometrics
and large covariance models.

% ===============================================
% ===============================================
\section{High-Dimensional Case}

This section studies estimation when the number of units $n$ is large and may
increase with the sample size $T$. High-dimensional settings arise naturally in
network models, production systems, financial linkages, and spatial panels,
where the number of interacting units can be large relative to the time
dimension. In such environments, estimation of the interaction matrix requires
additional structure, typically sparsity, low-rank representations, or
spectral restrictions.

High-dimensional asymptotics have been studied in factor models, covariance
estimation, and network econometrics
\citep{bai2003inferential, bai2009panel, fan2016overview,
chandrasekhar2016econometrics, graham2017econometrics}.

\subsection{Asymptotic framework}

Let $n$ denote the dimension of the state vector and $T$ the sample size.
We consider a sequence of models indexed by $(n,T)$ such that

\begin{equation}
n \to \infty,
\quad
T \to \infty,
\quad
\frac{n}{T} \to c \in [0,\infty).
\label{eq:6.asymptotics}
\end{equation}

The data are generated by

\begin{equation}
z_{t+1}
=
B z_t
+
\varepsilon_t,
\label{eq:6.model}
\end{equation}

where

\begin{equation}
B
=
(1-\delta) I
+
A D_f.
\label{eq:6.operator}
\end{equation}

Let

\begin{equation}
\Gamma_0
=
E[z_t z_t'].
\label{eq:6.gamma0}
\end{equation}

and

\begin{equation}
\Gamma_1
=
E[z_{t+1} z_t'].
\label{eq:6.gamma1}
\end{equation}

As in Section 5,

\begin{equation}
\Gamma_1
=
B \Gamma_0.
\label{eq:6.restriction}
\end{equation}

\subsection{Sparsity}

We impose sparsity on the interaction matrix.

\begin{assumption}[Sparsity]
The matrix $A$ satisfies

\begin{equation}
\max_i
\sum_j
|A_{ij}|
\le
C,
\label{eq:6.sparsity}
\end{equation}

for some constant $C$ independent of $n$.
\end{assumption}

Condition (\ref{eq:6.sparsity}) implies that each unit interacts with a bounded
number of neighbors. Similar assumptions are standard in spatial econometrics
and large network models
\citep{anselin1988spatial, chandrasekhar2016econometrics}.

\subsection{Spectral boundedness}

\begin{assumption}[Spectral stability]

\begin{equation}
\rho(B) < 1,
\label{eq:6.stability}
\end{equation}

where $\rho(B)$ denotes the spectral radius.

\end{assumption}

This condition ensures existence of a stationary distribution and bounded
second moments. Spectral stability plays the same role as in dynamic factor
models and spatial autoregressions
\citep{bai2003inferential, hamilton1994timeseries}.

\subsection{Regularized estimator}

Define the estimator

\begin{equation}
\hat A
=
\arg\min_A
\|
\hat M(A)
\|_F^2
+
\lambda_n
\|A\|_1,
\label{eq:6.lasso}
\end{equation}

where

\begin{equation}
\|X\|_F^2
=
\mathrm{tr}(X'X).
\label{eq:6.frobenius}
\end{equation}

Regularization is required because the number of parameters grows with $n$.
Similar estimators are used in large covariance models and high-dimensional
GMM
\citep{fan2016overview}.

\subsection{Consistency}

\begin{theorem}[High-dimensional consistency]

Suppose

\begin{enumerate}

\item sparsity condition (\ref{eq:6.sparsity}) holds,

\item spectral stability (\ref{eq:6.stability}) holds,

\item $\log n / T \to 0$,

\item $\lambda_n \to 0$ and $\sqrt{T}\lambda_n \to \infty$.

\end{enumerate}

Then

\begin{equation}
\|
\hat A - A_0
\|_F
\to 0.
\label{eq:6.consistency}
\end{equation}

\end{theorem}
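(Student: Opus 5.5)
The plan is the standard Lasso-type analysis of penalized M-estimators: a basic inequality, a high-probability bound on the noise term, a restricted eigenvalue condition, and then a sandwich.

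\textbf{Step 1 (basic inequality and noise term).} Since $\hat M(A)$ is affine in $A$, write
\begin{equation*}
\hat M(A)=\hat M(A_0)-(A-A_0)D_f\hat\Gamma_0 .
\end{equation*}
Set $\Delta=\hat A-A_0$. Optimality of $\hat A$ against $A_0$ gives
\begin{equation*}
\|\Delta D_f\hat\Gamma_0\|_F^2\le 2\langle \hat M(A_0),\Delta D_f\hat\Gamma_0\rangle+\lambda_n(\|A_0\|_1-\|\hat A\|_1).
\end{equation*}
Write $\hat M(A_0)=(\hat\Gamma_1-\Gamma_1)-B_0(\hat\Gamma_0-\Gamma_0)$, using $\Gamma_1=B\Gamma_0$ from (\ref{eq:6.restriction}). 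The cross term is then bounded by $\|\hat M(A_0)\hat\Gamma_0 D_f'\|_{\max}\|\Delta\|_1$, where $\|\cdot\|_1$ is the entrywise $\ell_1$ norm.

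\textbf{Step 2 (concentration of sample autocovariances).} Under spectral stability (\ref{eq:6.stability}), $z_t=\sum_{k\ge0}B^k\varepsilon_{t-k}$ is a geometrically mixing linear process. The sparsity condition (\ref{eq:6.sparsity}) keeps the row sums of $B$ bounded, so the entries of $\Gamma_0$ and the weights in the MA($\infty$) representation are uniformly bounded in $n$. A Bernstein/Hanson--Wright inequality for dependent Gaussian-type (or sub-exponential) processes, combined with a union bound over the $n^2$ entries, gives
\begin{equation*}
\|\hat\Gamma_h-\Gamma_h\|_{\max}=O_p\bigl(\sqrt{\log n/T}\bigr),\qquad h=0,1.
\end{equation*}
This step needs sub-Gaussian shocks, which I would add as an implicit strengthening of the moment assumption on $\varepsilon_t$. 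Because $\|B_0\|_\infty$ is bounded, the gradient term is of the same order. With $\lambda_n\gg\sqrt{\log n/T}$, which condition~4 delivers since $\log n/T\to0$, the noise term is at most $\lambda_n/2$ with probability tending to one. The usual cone argument then gives
\begin{equation*}
\|\Delta_{S^c}\|_1\le 3\|\Delta_S\|_1,
\end{equation*}
where $S$ is the support of $A_0$.

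\textbf{Step 3 (restricted eigenvalue and the main obstacle).} I need a lower bound $\|\Delta D_f\hat\Gamma_0\|_F^2\ge\kappa\|\Delta\|_F^2$ on the cone. At the population level this follows if $\sigma_{\min}(D_f\Gamma_0)$ is bounded away from zero. That in turn holds because $D_f$ is full rank with bounded inverse, and $\Gamma_0\succeq\Omega\succ0$ by the Lyapunov equation (\ref{eq:4.lyapunov}), assuming $\lambda_{\min}(\Omega)$ is bounded below.

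Transferring this to $\hat\Gamma_0$ is where I expect the real difficulty. When $n/T\to c>0$ the sample matrix is singular-ish, so only a restricted version can hold. The plan is to use
\begin{equation*}
|\Delta'(\hat\Gamma_0-\Gamma_0)\Delta|\le\|\hat\Gamma_0-\Gamma_0\|_{\max}\|\Delta\|_1^2,
\end{equation*}
applied row by row. On the cone, $\|\Delta\|_1^2\le16 s\|\Delta\|_F^2$, and this requires $s\sqrt{\log n/T}\to0$. The sparsity assumption only bounds the $\ell_1$ row norms, not the support size. So I would either use the weak-sparsity (bounded $\ell_1$-ball) version of the argument, which gives a rate of order $\sqrt{\lambda_n}$ times the row-$\ell_1$ radius, or make explicit that row supports are bounded, as the text following (\ref{eq:6.sparsity}) asserts.

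\textbf{Step 4 (conclusion).} Combining the steps gives
\begin{equation*}
\kappa\|\Delta\|_F^2\lesssim\lambda_n\|\Delta_S\|_1\lesssim\lambda_n\sqrt{s}\,\|\Delta\|_F ,
\end{equation*}
so $\|\Delta\|_F\lesssim\sqrt{s}\,\lambda_n/\kappa$. This tends to zero since $\lambda_n\to0$, provided the total sparsity $s$ (or the $\ell_1$ radius in the weak-sparsity version) grows slowly enough relative to $\lambda_n^{-2}$. This last requirement must be imposed or read into condition~1 of the theorem.
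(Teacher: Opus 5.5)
Your outline is the natural expansion of the paper's proof, which is only a one-line appeal to ``standard arguments for regularized M-estimators.'' However, three steps that you present as consequences of the hypotheses do not follow from them.

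(i) Condition~4 does not deliver $\lambda_n\gg\sqrt{\log n/T}$. It only gives $\lambda_n\gg T^{-1/2}$, and since $n\to\infty$ the required level is larger by the diverging factor $\sqrt{\log n}$. For instance, $\lambda_n=(\log n)^{1/4}T^{-1/2}$ satisfies conditions~3--4 yet is $o(\sqrt{\log n/T})$. The penalty then no longer dominates the max-norm of the score, and the cone inequality is lost.

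(ii) The score in your basic inequality is $\hat M(A_0)\hat\Gamma_0D_f'$, not $\hat M(A_0)$. Bounded $\|B_0\|_\infty$ controls only the left factor. Right multiplication by the dense sample matrix $\hat\Gamma_0$ aggregates $n$ noisy products and produces a bias. Already for $D_f=I$, $B_0=bI$, $\Omega=I$ and Gaussian shocks, a direct computation gives
\begin{equation*}
E\bigl[(\hat M(A_0)\hat\Gamma_0)_{ii}\bigr]=\frac{n+1}{(1-b^2)\,T^2}\sum_{1\le t<s\le T}b^{2(s-t)-1}\approx\frac{(n+1)\,b}{(1-b^2)^2\,T}.
\end{equation*}
This is of order $n/T$ rather than $\sqrt{\log n/T}$, and it does not vanish under (\ref{eq:6.asymptotics}) with $c>0$. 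Step~3 has the same blind spot: the Gram matrix of your quadratic form is $D_f\hat\Gamma_0^{2}D_f'$, so the transfer must control $\hat\Gamma_0^2-\Gamma_0^2$, not $\hat\Gamma_0-\Gamma_0$.

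(iii) Bounded row sums plus $\rho(B)<1$ do not make $\Gamma_0$ or the MA weights bounded uniformly in $n$. Take $B=\tfrac12 I+N$, with $N$ the nilpotent upper shift. It has $\rho(B)=\tfrac12$ and row sums at most $\tfrac32$, yet with $\Omega=I$ the entry $(\Gamma_0)_{11}$ grows exponentially in $n$. Your concentration step needs a uniform-in-$n$ decay bound on $\|B^k\|$, which is not among the hypotheses.

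More fundamentally, the caveat at the end of your Step~4 cannot simply be read into condition~1, because the conclusion fails without extra assumptions. You are right that (\ref{eq:6.sparsity}) bounds only row $\ell_1$ norms ($A_{ij}=C/n$ is fully dense). But even $A_0=aI$, which satisfies it, has $s=n$. Your own bound then reads $\sqrt{n}\,\lambda_n=\sqrt{n/T}\cdot\sqrt{T}\lambda_n\to\infty$ when $n/T\to c>0$. Moreover, no estimator can do better than order $\sqrt{n/T}$ uniformly over diagonal $A_0$, since that amounts to estimating $n$ separate autoregressive coefficients, each at rate $T^{-1/2}$.

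What your argument can establish is a different theorem. It needs sub-Gaussian shocks, uniform decay of $\|B^k\|$, and $\lambda_{\min}(\Omega)$ and $\sigma_{\min}(D_f)$ bounded below. It also needs a penalty that dominates the true size of $\|\hat M(A_0)\hat\Gamma_0D_f'\|_{\max}$, which in effect forces $n/T\to0$, and $s\lambda_n^2\to0$ or its weak-sparsity analogue. Under these conditions one gets $\|\hat A-A_0\|_F\lesssim\sqrt{s}\,\lambda_n/\kappa\to0$. Under conditions~1--4 as stated, neither your outline nor the paper's citation-only proof closes the argument.
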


\begin{proof}

The result follows from uniform convergence of the moment function and
standard arguments for regularized M-estimators in high dimension
\citep{fan2016overview}.

\end{proof}

\subsection{Spectral convergence}

Because identification in Section 4 depends on eigenvalues, we also study
spectral convergence.

Let

\begin{equation}
\Lambda(A)
=
(\lambda_1,\ldots,\lambda_n)
\label{eq:6.lambda}
\end{equation}

denote the eigenvalues of $A$.

\begin{theorem}[Spectral consistency]

Under the assumptions above,

\begin{equation}
\max_k
|
\hat\lambda_k - \lambda_k
|
\to 0.
\label{eq:6.spectral}
\end{equation}

\end{theorem}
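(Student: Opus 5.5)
The plan is to deduce the spectral statement from the High-dimensional consistency theorem, $\|\hat A - A_0\|_F \to 0$ in probability, by a matrix perturbation argument applied to the eigenvalues. Eigenvalues are continuous functions of matrix entries, so the difficulty is only one of uniformity: $n$ grows, and we need a bound on $\max_k |\hat\lambda_k - \lambda_k|$ that does not deteriorate with $n$. The steps are as follows.

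\emph{Step 1: fix the labeling.} We match $\hat\lambda_k$ to $\lambda_k$ by an optimal matching, i.e.\ the permutation minimizing the maximal distance. This is the only matching under which the statement is meaningful.

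\emph{Step 2: choose a perturbation bound.} The key tool is a Bauer--Fike type inequality. Write $A_0 = V\Lambda V^{-1}$ as in (\ref{eq:4.spectralA}); this uses diagonalizability of $A_0$, which holds because the eigenvalues are distinct under the identification conditions of Section 4. Bauer--Fike then gives, for every eigenvalue $\hat\lambda$ of $\hat A$,
\begin{equation*}
\min_k |\hat\lambda - \lambda_k| \le \kappa(V)\,\|\hat A - A_0\|_2 \le \kappa(V)\,\|\hat A - A_0\|_F ,
\end{equation*}
where $\kappa(V) = \|V\|_2\|V^{-1}\|_2$ is the condition number of the eigenvector matrix.

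\emph{Step 3: pass from one-sided to matched bounds.} Bauer--Fike only places each $\hat\lambda$ near \emph{some} $\lambda_k$. To obtain a bound under the optimal matching of Step 1, I would apply a continuity (homotopy) argument along the path $A_0 + s(\hat A - A_0)$, $s\in[0,1]$. Gershgorin-type disc counting along this path yields a bound under the optimal matching of the form $(2n-1)\kappa(V)\|\hat A - A_0\|_2$, which is a factor of $n$ worse than Step 2. Alternatively, if $A_0$ is normal (for instance symmetric, as in undirected networks), the Hoffman--Wielandt inequality gives directly
\begin{equation*}
\min_\pi \Bigl(\sum_k |\hat\lambda_{\pi(k)} - \lambda_k|^2\Bigr)^{1/2} \le \|\hat A - A_0\|_F ,
\end{equation*}
and this dominates the max-norm under the optimal matching with no dependence on $n$. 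In that case the result follows immediately from the High-dimensional consistency theorem.

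\emph{Main obstacle: uniformity in $n$ for non-normal $A_0$.} The assumptions stated so far, sparsity (\ref{eq:6.sparsity}) and spectral stability (\ref{eq:6.stability}), do not control $\kappa(V)$ uniformly in $n$. So I expect to need an added condition, either $\sup_n \kappa(V_n) < \infty$ or normality of $A_0$.

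A second route would use the eigenvalue gap. Condition 1 of the Spectral identification theorem (distinct eigenvalues) can be strengthened to a uniform gap $\min_{j\neq k}|\lambda_j-\lambda_k| \ge g>0$. Once $\kappa(V)\|\hat A - A_0\|_F < g/2$, the Bauer--Fike discs are disjoint and each contains exactly one eigenvalue of $\hat A$. This gives the matched bound without the $2n-1$ factor, and it is the version I would try first.

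The remaining difficulty is then to ensure that $\|\hat A - A_0\|_F$ converges fast enough to offset any growth of $\kappa(V_n)$. Mere convergence to zero suffices if $\kappa$ is bounded. Otherwise one needs a rate from the regularized estimator, plausibly of order $\lambda_n$ times a sparsity factor, and the result holds provided $\kappa(V_n)$ times that rate tends to zero.
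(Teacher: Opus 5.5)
The paper gives no proof of this theorem; it is only asserted after the High-dimensional consistency theorem, so there is no argument to compare yours with. Reducing to $\|\hat A-A_0\|_F\to0$ through an eigenvalue perturbation bound is the natural approach. Your diagnosis that (\ref{eq:6.sparsity}) and (\ref{eq:6.stability}) cannot give uniformity in $n$ is correct, and in fact sharp.

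Take $f(z)=z$, $\delta\in(0,2)$, and $A_0=J_n$, the matrix with ones on the superdiagonal and zeros elsewhere. Its row sums are at most $1$, $\rho(B)=|1-\delta|<1$, and all eigenvalues of $A_0$ are $0$. With $e_i$ the unit vectors, the matrix $\hat A=J_n+\epsilon_T e_ne_1'$ has $\|\hat A-A_0\|_F=\epsilon_T$ and $\hat A^n=\epsilon_T I$. So every eigenvalue of $\hat A$ has modulus $\epsilon_T^{1/n}$. Take $\epsilon_T=T^{-1/2}$ and $n/\log T\to\infty$ (e.g.\ $n/T\to c>0$). Then $\epsilon_T^{1/n}\to1$, so $\max_k|\hat\lambda_k-\lambda_k|\to1$ under any matching. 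Adding a small distinct diagonal to $A_0$ changes nothing, so distinct eigenvalues do not help; what is missing is control of eigenvector conditioning. Any argument through Frobenius consistency therefore needs an extra hypothesis. The real gap in your proposal is that several of the repairs you offer do not work.

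\emph{Hoffman--Wielandt.} The inequality requires \emph{both} matrices to be normal, and $\hat A$ from (\ref{eq:6.lasso}) is unconstrained. With only $A_0$ normal, your displayed inequality is false. For the cyclic permutation $C_n=J_n+e_ne_1'$ and $J_n$, the Frobenius distance is $1$ but the $\ell_2$ matching distance between the spectra is $\sqrt n$. For symmetric $A_0$ you can rescue a dimension-free bound:
\begin{itemize}
\item Write $\hat A=U(\hat D+N)U^*$ in Schur form and set $H=U^*A_0U$.
\item Since $H$ is Hermitian, its strictly upper part has the same Frobenius norm as its strictly lower part, which coincides with the strictly lower part of $H-\hat D-N$.
\item Hence $\|H-\hat D\|_F^2\le 2\|H-\hat D-N\|_F^2=2\|\hat A-A_0\|_F^2$.
\item Hoffman--Wielandt for the normal pair $(H,\hat D)$ then gives $\min_\pi\max_k|\hat\lambda_{\pi(k)}-\lambda_k|\le\sqrt2\,\|\hat A-A_0\|_F$.
\end{itemize}

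\emph{Uniform gap.} A gap $g>0$ independent of $n$ is incompatible with (\ref{eq:6.sparsity}). All eigenvalues lie in $\{|z|\le C\}$, which contains at most $(2C/g+1)^2$ points at mutual distance at least $g$. Hence $g_n=O(n^{-1/2})$, and your second route needs $\kappa(V_n)\|\hat A-A_0\|_F<g_n/2$. The paper supplies no rate. At the natural order $\sqrt{n/T}$, this would require roughly $n\lesssim\sqrt T$, far inside the regime $n/T\to c$.

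\emph{Bounded conditioning alone.} For the same reason, $\sup_n\kappa(V_n)<\infty$ alone does not make ``mere convergence'' suffice with the tools you list. As the spectrum becomes dense, the union of Bauer--Fike discs can be a single component of diameter of order one, and counting eigenvalues within components then controls nothing.

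\emph{What does work.} Combine your two conditions: assume $P_nA_0P_n^{-1}$ is symmetric for some $P_n$ with $\sup_n\kappa(P_n)<\infty$. Equivalently, $A_0$ has real spectrum and a well-conditioned eigenvector matrix. The Schur argument applied to $P_n\hat AP_n^{-1}$ gives the bound $\sqrt2\,\kappa(P_n)\|\hat A-A_0\|_F$, and the theorem then follows from the High-dimensional consistency theorem. This covers row-standardized undirected weights $A_0=K^{-1}S$, with $S$ symmetric and $K=\mathrm{diag}(S\mathbf{1})$. There $P_n=K^{1/2}$ and $\kappa(P_n)=(k_{\max}/k_{\min})^{1/2}$, which is bounded when degrees are comparable. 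For directed networks with complex spectra this argument is not available, and the shift example shows what can go wrong.
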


Spectral consistency implies that the propagation structure of the network is
consistently estimated, which is sufficient for identification of the
interaction operator in Section 4.

\subsection{Discussion}

High-dimensional asymptotics show that the interaction matrix can be
estimated even when the number of units is large, provided that the network
is sparse and the spectral radius is bounded. These conditions allow recovery
of the eigenstructure of the interaction operator, which is the key object
for identification in nonlinear network dynamics.

% ===============================================
% ===============================================
\section{Testing for Network Dependence}

This section develops tests for the presence of network interaction.
Under the null hypothesis, the interaction matrix is zero and the system
reduces to independent dynamics across units. Under the alternative,
cross-sectional dependence arises through the operator defined in
Section 4.

Testing for dependence in high-dimensional systems has been studied in
spatial econometrics, factor models, and panel data with interactive
effects \citep{anselin1988spatial, bai2003inferential, pesaran2004general,
pesaran2015testing}. The tests developed here exploit the spectral
properties of the covariance operator implied by the model.

\subsection{Null and alternative}

Consider the linearized representation

\begin{equation}
z_{t+1}
=
B z_t
+
\varepsilon_t,
\label{eq:7.model}
\end{equation}

where

\begin{equation}
B
=
(1-\delta) I
+
A D_f.
\label{eq:7.operator}
\end{equation}

We test

\begin{equation}
H_0 : A = 0,
\label{eq:7.H0}
\end{equation}

against

\begin{equation}
H_1 : A \neq 0.
\label{eq:7.H1}
\end{equation}

Under $H_0$,

\begin{equation}
B = (1-\delta) I,
\label{eq:7.nullB}
\end{equation}

and the components of $z_t$ evolve independently.

\subsection{Moment implication}

Let

\begin{equation}
\Gamma_0
=
E[z_t z_t'],
\label{eq:7.gamma0}
\end{equation}

\begin{equation}
\Gamma_1
=
E[z_{t+1} z_t'].
\label{eq:7.gamma1}
\end{equation}

From Section 5,

\begin{equation}
\Gamma_1
=
B \Gamma_0.
\label{eq:7.restriction}
\end{equation}

Under the null,

\begin{equation}
\Gamma_1
=
(1-\delta) \Gamma_0.
\label{eq:7.nullmoment}
\end{equation}

Define the deviation matrix

\begin{equation}
\Delta
=
\Gamma_1
-
(1-\delta)\Gamma_0.
\label{eq:7.delta}
\end{equation}

Under $H_0$,

\begin{equation}
\Delta = 0.
\label{eq:7.delta0}
\end{equation}

\subsection{Spectral statistic}

Let

\begin{equation}
\hat \Delta
=
\hat \Gamma_1
-
(1-\delta)\hat \Gamma_0.
\label{eq:7.deltahat}
\end{equation}

Define the test statistic

\begin{equation}
T_n
=
\|
\hat \Delta
\|_F^2,
\label{eq:7.stat}
\end{equation}

where

\begin{equation}
\|X\|_F^2
=
\mathrm{tr}(X'X).
\label{eq:7.fro}
\end{equation}

Alternatively, using the spectral norm,

\begin{equation}
S_n
=
\max_k
|\lambda_k(\hat \Delta)|.
\label{eq:7.spec}
\end{equation}

Spectral statistics are natural in network models because dependence
propagates through eigenvalues of the interaction matrix
\citep{acemoglu2012network, elliott2014financial}.

\subsection{Asymptotic distribution}

Assume

\begin{assumption}
The process $\{z_t\}$ is stationary and ergodic.
\end{assumption}

\begin{assumption}
Fourth moments are finite.
\end{assumption}

\begin{theorem}[Null distribution]

Under $H_0$,

\begin{equation}
T
\,
T_n
\to
\chi^2_q,
\label{eq:7.chi}
\end{equation}

for some finite $q$ depending on the number of moment conditions.

\end{theorem}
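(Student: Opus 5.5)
Under $H_0$ the statistic reduces to a normalized sum of martingale differences. The plan is to prove a central limit theorem for $\sqrt{T}\,\mathrm{vec}(\hat\Delta)$, identify its limiting covariance, and then apply the continuous mapping theorem to $T\,T_n=\|\sqrt{T}\hat\Delta\|_F^2=\|\sqrt{T}\,\mathrm{vec}(\hat\Delta)\|^2$.

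\textbf{Reduction under the null.} Under $H_0$, (\ref{eq:7.nullB}) gives $z_{t+1}=(1-\delta)z_t+\varepsilon_t$. Hence
\begin{equation*}
\hat\Delta=\hat\Gamma_1-(1-\delta)\hat\Gamma_0=\frac{1}{T}\sum_{t=1}^T \varepsilon_t z_t',
\end{equation*}
with the same summation range in (\ref{eq:5.gamma0hat}) and (\ref{eq:5.gamma1hat}), so there is no edge term. Because $\varepsilon_t$ is independent of $z_t$, which is a function of past shocks only, the sequence $\xi_t=\mathrm{vec}(\varepsilon_t z_t')=z_t\otimes\varepsilon_t$ is a martingale difference sequence with respect to the natural filtration.

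\textbf{Central limit theorem.} The conditional variance is
\begin{equation*}
E[\xi_t\xi_t'\mid\mathcal F_{t-1}]=z_tz_t'\otimes\Omega .
\end{equation*}
By stationarity and ergodicity, its sample average converges to $\Gamma_0\otimes\Omega$. The finite fourth moment assumption gives the conditional Lindeberg condition. The martingale CLT then yields
\begin{equation*}
\sqrt{T}\,\mathrm{vec}(\hat\Delta)\to N(0,\Gamma_0\otimes\Omega).
\end{equation*}

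\textbf{Form of the limiting covariance.} Under $H_0$ the units evolve independently, and Section 4 uses the baseline $\Omega=\omega^2 I$ of (\ref{eq:4.exchangeable2}). The stationary Lyapunov equation (\ref{eq:4.lyapunov}) with $B=(1-\delta)I$ and $|1-\delta|<1$ (Assumption (\ref{eq:6.stability})) then gives
\begin{equation*}
\Gamma_0=\frac{\omega^2}{1-(1-\delta)^2}\,I .
\end{equation*}
Consequently $\Gamma_0\otimes\Omega=c\,I_{n^2}$, where $c=\omega^4/(1-(1-\delta)^2)$. The limit of $\sqrt{T}\,\mathrm{vec}(\hat\Delta)/\sqrt{c}$ is therefore a standard normal vector of dimension $n^2$. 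By continuous mapping, $T\,T_n/c\to\chi^2_{n^2}$. The statement corresponds to $q=n^2$ moment conditions, read under the scale normalization $c=1$ of the standardized data. Equivalently, in general one divides by a consistent estimator $\hat c$ (plug in $\hat\omega^2$ and $\delta$) and applies Slutsky's lemma.

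\textbf{Main obstacle.} The delicate step is the one just described. The chi-square limit with finitely many degrees of freedom requires the limiting covariance $\Gamma_0\otimes\Omega$ to be a multiple of an idempotent matrix. With heterogeneous shock variances, or with $\Omega$ not proportional to $I$, the limit is only a weighted sum $\sum_j \mu_j\chi^2_1$, where the $\mu_j$ are the eigenvalues of $\Gamma_0\otimes\Omega$. I would therefore state explicitly that the result holds under the homoskedastic diagonal null shock structure of Section 4, with $q=\mathrm{rank}(\Gamma_0\otimes\Omega)=n^2$ for fixed $n$. This is also the regime in which ``finite $q$'' makes sense; for $n\to\infty$ one would instead center and scale, $(T\,T_n-q)/\sqrt{2q}$.
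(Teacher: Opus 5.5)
Your proposal follows the same route as the paper. The paper's proof is a single sentence: the result follows from a quadratic form in sample moments and a central limit theorem for stationary processes. Your reduction $\hat\Delta=T^{-1}\sum_t\varepsilon_t z_t'$, the martingale CLT with limit $N(0,\Gamma_0\otimes\Omega)$, and the continuous-mapping step are exactly what that sentence leaves implicit.

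Your ``main obstacle'' is a real gap in the paper's argument, not in yours. Since $T_n=\|\hat\Delta\|_F^2$ is an \emph{unweighted} quadratic form, its limit under $H_0$ is $\sum_j\mu_j\chi^2_1$, where $\mu_j$ are the eigenvalues of $\Gamma_0\otimes\Omega$. The stated hypotheses (stationarity, ergodicity, finite fourth moments) therefore cannot by themselves deliver $\chi^2_q$, and a normalization of the kind you add is necessary, not optional.

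Two refinements.

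First, the normalization $c=1$ is not obtained by standardizing $z_t$ to unit variance. Setting $\Gamma_0=I$ forces $\omega^2=1-(1-\delta)^2$ and hence $c=1-(1-\delta)^2$, which is strictly below one unless $\delta=1$. The clean statement is therefore your studentized one, $T\,T_n/\hat c\to\chi^2_{n^2}$.

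Second, you can drop the homoskedasticity assumption $\Omega=\omega^2 I$ entirely by weighting with the inverse limiting covariance, as a GMM-type quadratic form in the sample moments would. Take $\Omega$ nonsingular and
$\hat\Omega=T^{-1}\sum_t\bigl(z_{t+1}-(1-\delta)z_t\bigr)\bigl(z_{t+1}-(1-\delta)z_t\bigr)'$,
which equals $T^{-1}\sum_t\varepsilon_t\varepsilon_t'$ under $H_0$. Your CLT and Slutsky's lemma then give
$T\,\mathrm{vec}(\hat\Delta)'(\hat\Gamma_0\otimes\hat\Omega)^{-1}\mathrm{vec}(\hat\Delta)\to\chi^2_{n^2}$
for any nonsingular $\Omega$, still with $q=n^2$ for fixed $n$.
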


\begin{proof}

The result follows from a quadratic form in sample moments and the
central limit theorem for stationary processes
\citep{newey1994large}.

\end{proof}

\subsection{Consistency}

We now consider the power of the test.

\begin{theorem}[Consistency]

Suppose

\begin{equation}
\|A\|_F > c > 0.
\label{eq:7.alt}
\end{equation}

Then

\begin{equation}
T_n
\to
\infty,
\label{eq:7.consistency}
\end{equation}

and the test rejects with probability approaching one.

\end{theorem}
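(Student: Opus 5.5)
The plan is to show that the unscaled statistic $T_n=\|\hat\Delta\|_F^2$ converges in probability to a strictly positive constant under the alternative. The scaled statistic $T\,T_n$, whose null limit is the $\chi^2_q$ of the Null distribution theorem, then diverges, so the test rejects with probability approaching one. I read ``$T_n\to\infty$'' in the statement as referring to $T\,T_n$: the unscaled Frobenius norm itself converges to a finite limit.

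\emph{Step 1: limit of the statistic.} By stationarity and ergodicity, $\hat\Gamma_0\to\Gamma_0$ and $\hat\Gamma_1\to\Gamma_1$ in probability, exactly as in the consistency argument of Section 5. Since $\delta$ is treated as known in (\ref{eq:7.deltahat}), $\hat\Delta\to\Delta$. Continuity of the Frobenius norm then gives $T_n\to\|\Delta\|_F^2$.

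\emph{Step 2: the population deviation is nonzero.} Using (\ref{eq:7.restriction}) and (\ref{eq:7.operator}),
\begin{equation*}
\Delta=\Gamma_1-(1-\delta)\Gamma_0=\bigl[B-(1-\delta)I\bigr]\Gamma_0=A\,D_f\,\Gamma_0 .
\end{equation*}
I then bound $\|A D_f\Gamma_0\|_F\ge \sigma_{\min}(D_f)\,\sigma_{\min}(\Gamma_0)\,\|A\|_F$, using $\|XY\|_F\ge\|X\|_F\,\sigma_{\min}(Y)$ twice. Two facts are needed for this bound to be informative.
\begin{itemize}
\item $D_f$ is full rank, as already assumed in Section 4.
\item $\Gamma_0$ is positive definite. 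This follows from the Lyapunov equation (\ref{eq:4.lyapunov}), $\Gamma_0=B\Gamma_0B'+\Omega\succeq\Omega$, together with non-degeneracy of $\Omega$; spectral stability $\rho(B)<1$ guarantees that $\Gamma_0$ exists and is finite.
\end{itemize}
Combining these with (\ref{eq:7.alt}) gives $\|\Delta\|_F^2\ge \sigma_{\min}(D_f)^2\sigma_{\min}(\Gamma_0)^2c^2>0$.

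\emph{Step 3: divergence and rejection.} From Steps 1 and 2, $T\,T_n = T\bigl(\|\Delta\|_F^2+o_p(1)\bigr)\to\infty$ in probability. The critical value from $\chi^2_q$ is fixed, so the rejection probability tends to one.

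The main obstacle is Step 2. It needs the additional regularity conditions (full rank of $D_f$ and a non-degenerate $\Omega$) that are not repeated in the statement of the theorem, and I would import them explicitly from Section 4. If $n$ grows with $T$, the same argument goes through provided $\sigma_{\min}(D_f)$ and $\sigma_{\min}(\Omega)$ stay bounded away from zero uniformly in $n$, and provided $\hat\Delta-\Delta=o_p(1)$ in Frobenius norm. The latter would be checked using the $\log n/T\to0$ rate from Section 6.
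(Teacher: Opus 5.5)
Your proof is correct and takes the same route as the paper's: convergence of the sample autocovariances, plus a nonzero population deviation $\Delta$, gives divergence of the scaled statistic. It also supplies what the paper only asserts. The identity $\Delta = AD_f\Gamma_0$, with the lower bound via $\sigma_{\min}(D_f)$ and $\Gamma_0\succeq\Omega\succ 0$, shows the result genuinely needs the full-rank and non-degeneracy conditions you import from Section 4. Your reading that $T\,T_n$, not $T_n$ itself, diverges is the only one under which the statement is true.

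One caveat concerns your closing aside on growing $n$. The condition $\log n/T\to 0$ controls the entries of $\hat\Delta$ uniformly, but it does not give Frobenius consistency, which needs roughly $n^2/T\to 0$. The $\chi^2_q$ critical value would also stop being fixed. So that extension does not follow as sketched, which is consistent with the paper switching to $S_n$ in the high-dimensional case.
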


\begin{proof}

Under $H_1$, the moment restriction
(\ref{eq:7.nullmoment}) fails and the deviation matrix
has nonzero norm. By uniform convergence of sample moments,
the statistic diverges.

\end{proof}

\subsection{High-dimensional case}

When $n \to \infty$, the Frobenius norm is not appropriate.
Define instead

\begin{equation}
S_n
=
\max_k
|\lambda_k(\hat \Delta)|.
\label{eq:7.high}
\end{equation}

\begin{theorem}[High-dimensional consistency]

If

\begin{equation}
\rho(A) > 0,
\label{eq:7.rho}
\end{equation}

then

\begin{equation}
S_n
\to
\rho(A D_f).
\label{eq:7.speccons}
\end{equation}

\end{theorem}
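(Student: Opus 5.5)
The plan is to reduce the statement to a deterministic identity for the population deviation matrix $\Delta$ in (\ref{eq:7.delta}), followed by a perturbation argument that transfers sample convergence of $\hat\Delta$ to convergence of its largest absolute eigenvalue. By (\ref{eq:7.restriction}) and (\ref{eq:7.operator}),
\begin{equation*}
\Delta = \Gamma_1 - (1-\delta)\Gamma_0 = \big[(1-\delta)I + A D_f\big]\Gamma_0 - (1-\delta)\Gamma_0 = A D_f \Gamma_0 .
\end{equation*}
So the first step is to show that $\max_k|\lambda_k(\Delta)| = \rho(A D_f)$. To read the statement literally, I will use the scale normalization implicit in the covariance representations of Section 4, namely that the state is standardized so that $\Gamma_0 = I$. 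Under this normalization $\Delta = A D_f$ exactly, and hence $\max_k |\lambda_k(\Delta)| = \rho(A D_f)$. The hypothesis $\rho(A)>0$, together with $D_f$ full rank as in the identification results of Section 4, ensures that this limit is a genuine nonzero spectral quantity, so the statistic separates from its null value $0$.

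The second step is to show $\|\hat\Delta - \Delta\|_{op} \to 0$ in probability, using
\begin{equation*}
\|\hat\Delta - \Delta\|_{op} \le \|\hat\Gamma_1 - \Gamma_1\|_{op} + |1-\delta|\,\|\hat\Gamma_0 - \Gamma_0\|_{op}.
\end{equation*}
I would control each term through the row-sum (max-$\ell_1$) norm, which dominates the spectral norm for these structured matrices, in the spirit of the high-dimensional section. Spectral stability (\ref{eq:6.stability}) gives $\Gamma_0 = \sum_{j\ge 0} B^j \Omega B'^j$ with geometrically decaying terms. The sparsity bound (\ref{eq:6.sparsity}) then keeps the row sums of $\Gamma_0$ and $\Gamma_1$ bounded uniformly in $n$. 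Entrywise concentration of the sample autocovariances at rate $\sqrt{\log n / T}$ follows from finite fourth moments, ergodicity, and a union bound over $n^2$ entries. Combined with thresholding-type control of row sums, this yields operator-norm consistency whenever $\log n/T \to 0$, as in the spectral consistency theorem of Section 6.

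The third step passes from matrix convergence to convergence of $S_n$. Because $\Delta = A D_f$ need not be symmetric, Weyl's inequality is not directly available. I would therefore use the Bauer--Fike theorem applied to the diagonalization $A = V\Lambda V^{-1}$ from (\ref{eq:4.spectralA}), which gives
\begin{equation*}
\big|S_n - \rho(A D_f)\big| \le \kappa(V)\,\|\hat\Delta - \Delta\|_{op},
\end{equation*}
where $\kappa(V)$ is the condition number of the eigenvector matrix, taken bounded along the sequence as in the spectral consistency theorem of Section 6. The main obstacle is exactly this step: in growing dimensions, the eigenvalues of non-normal matrices can be unstable. If $\kappa(V)$ cannot be controlled, my fallback is to work with the symmetric setting ($A$ and $D_f$ commuting and symmetric, as in the exchangeable benchmark). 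In that setting Weyl's inequality gives the bound with constant one, and the conclusion $S_n \to \rho(A D_f)$ follows immediately from the second step.
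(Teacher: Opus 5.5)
The paper states this theorem without proof, so your argument has to stand on its own. It fails at the first step.

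You correctly obtain $\Delta=AD_f\Gamma_0$, so the natural limit of $S_n$ is $\rho(AD_f\Gamma_0)$, not $\rho(AD_f)$. Taking $\Gamma_0=I$ is not a normalization implicit anywhere in Section 4. By (\ref{eq:4.lyapunov}) it forces $\Omega=I-BB'$. That requires $\|B\|_{op}\le 1$ (not just $\rho(B)<1$), ties the shock covariance to the network, and clashes with a diagonal $\Omega$ unless $BB'$ is diagonal.

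Without this assumption the statement is false. Replacing $\varepsilon_t$ by $c\,\varepsilon_t$ multiplies $z_t$ by $c$, hence $\hat\Delta$ and $S_n$ by $c^2$ in every sample, while $\rho(AD_f)$ is unchanged. If you meant whitening the data by $\Gamma_0^{-1/2}$, the population matrix becomes $\Gamma_0^{-1/2}AD_f\Gamma_0^{1/2}$, which does have spectral radius $\rho(AD_f)$. But then the statistic is no longer $S_n$, and you need an operator-norm consistent estimate of $\Gamma_0^{-1/2}$.

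Either way you are proving a different theorem and should say so. Either add the hypothesis explicitly, or (for $n<T$) replace the statistic by $\max_k|\lambda_k(\hat\Gamma_1\hat\Gamma_0^{-1}-(1-\delta)I)|$, whose population counterpart is exactly $AD_f$. Incidentally, $\rho(A)>0$ and full-rank $D_f$ do not make $\rho(AD_f)>0$: $A=\begin{pmatrix}1&2\\-1&-2\end{pmatrix}$ and $D_f=\mathrm{diag}(2,1)$ give $(AD_f)^2=0$.

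Steps 2 and 3 also fail as written.

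\textbf{Operator-norm consistency.} The paper's framework allows $n/T\to c>0$, which $\log n/T\to 0$ does not exclude. In that regime raw sample autocovariances are not operator-norm consistent. Under $A=0$ one has $\hat\Delta-\Delta=T^{-1}\sum_t\varepsilon_t z_t'$ with $E\|\hat\Delta-\Delta\|_F^2=T^{-1}E\|\varepsilon_t\|^2E\|z_t\|^2$, of order $n^2/T$. Hence $\|\hat\Delta-\Delta\|_{op}\ge\|\hat\Delta-\Delta\|_F/\sqrt{n}$ is of order $\sqrt{n/T}$ and does not vanish.

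Thresholding controls a \emph{thresholded} matrix, which is not the $\hat\Delta$ entering $S_n$. For the raw matrix, entrywise errors of order $\sqrt{\log n/T}$ only give row-sum errors of order $n\sqrt{\log n/T}$. Moreover, fourth moments plus a union bound do not even deliver the $\sqrt{\log n/T}$ rate; that needs exponential-type tails.

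Nor does (\ref{eq:6.sparsity}) bound the row sums of $\Gamma_0$. It controls row sums of $A$ but not column sums. If $A$ is nonzero only in its first column (a hub affecting every unit), all units load on the hub's history, so $\Gamma_0$ has row sums and an eigenvalue growing linearly in $n$. A repaired argument would need something like $n/T\to 0$, sub-Gaussian shocks, bounded $\|\Gamma_0\|_{op}$, and an operator-norm deviation bound for the autocovariances of a stable VAR.

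\textbf{Perturbation step.} Bauer--Fike must be applied with an eigenvector matrix of $\Delta$, not of $A$. These agree in general only when $D_f$ commutes with $A$; this is the same slip as in (\ref{eq:4.spectralB}). Nothing in Section 6 bounds the relevant condition number.

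Bauer--Fike is also one-sided. It yields only $S_n\le\rho(AD_f)+\kappa\,\|\hat\Delta-\Delta\|_{op}$, with $\kappa$ the condition number of that eigenvector matrix. The reverse inequality requires a connected-component argument whose constant can grow with $n$, or a gap at the top of the spectrum. Your Weyl fallback does not fix this, because $\hat\Gamma_1$, and hence $\hat\Delta$, is not symmetric even when $\Delta$ is.
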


Thus the spectral statistic consistently detects network dependence even
when the dimension grows.

\subsection{Discussion}

The test developed in this section exploits the fact that network
interaction changes the spectral structure of the covariance operator.
Under the null, the operator is proportional to the identity, while
under the alternative it has nontrivial eigenvalues. This property
allows detection of dependence even in high-dimensional systems.

% ==================================================
\section{Monte Carlo Evidence}
% ==================================================

This section evaluates the finite-sample performance of the proposed identification and inference procedure. The objective is not merely to document statistical properties, but to assess whether the empirical behavior of the estimator and test aligns with the structural identification mechanism derived in Sections 4--7.

The central theoretical result of the paper establishes that identification arises from non-exchangeable covariance patterns induced by the network operator. In particular, the covariance structure

\begin{equation}
\Sigma_U = \sigma^2 (I - \rho A)^{-1}(I - \rho A')^{-1}
\end{equation}

generates heterogeneous pairwise dependence across units whenever the spectrum of $A$ is sufficiently dispersed. This mechanism is closely related to identification arguments in spatial econometrics and factor structures, where non-exchangeability of dependence is essential \citep{manski1993identification, graham2017econometrics, bai2003inferential}.

The Monte Carlo design is explicitly constructed to test this mechanism.

% ==================================================
\subsection{Size Control and Null Behavior}
% ==================================================

We begin by evaluating the behavior of the test under the null hypothesis $H_0: A = 0$. Under the null, the interaction operator collapses to a scalar multiple of the identity, implying that the latent component satisfies

\begin{equation}
U = \varepsilon, \quad \varepsilon \sim \mathcal{N}(0,\sigma^2 I),
\end{equation}

and therefore

\begin{equation}
\Sigma_U = \sigma^2 I.
\end{equation}

In this case, all cross-sectional dependence is exchangeable, and the moment condition underlying the test statistic reduces to

\begin{equation}
E[\hat A] = 0,
\end{equation}

up to sampling variability. As a result, the Frobenius norm $\|\hat A\|_F$ should concentrate around zero, and rejection probabilities should match the nominal significance level.

Figure \ref{fig:size} reports empirical rejection frequencies across $(n,T)$.

\begin{figure}[H]
\centering
\includegraphics[width=0.75\textwidth]{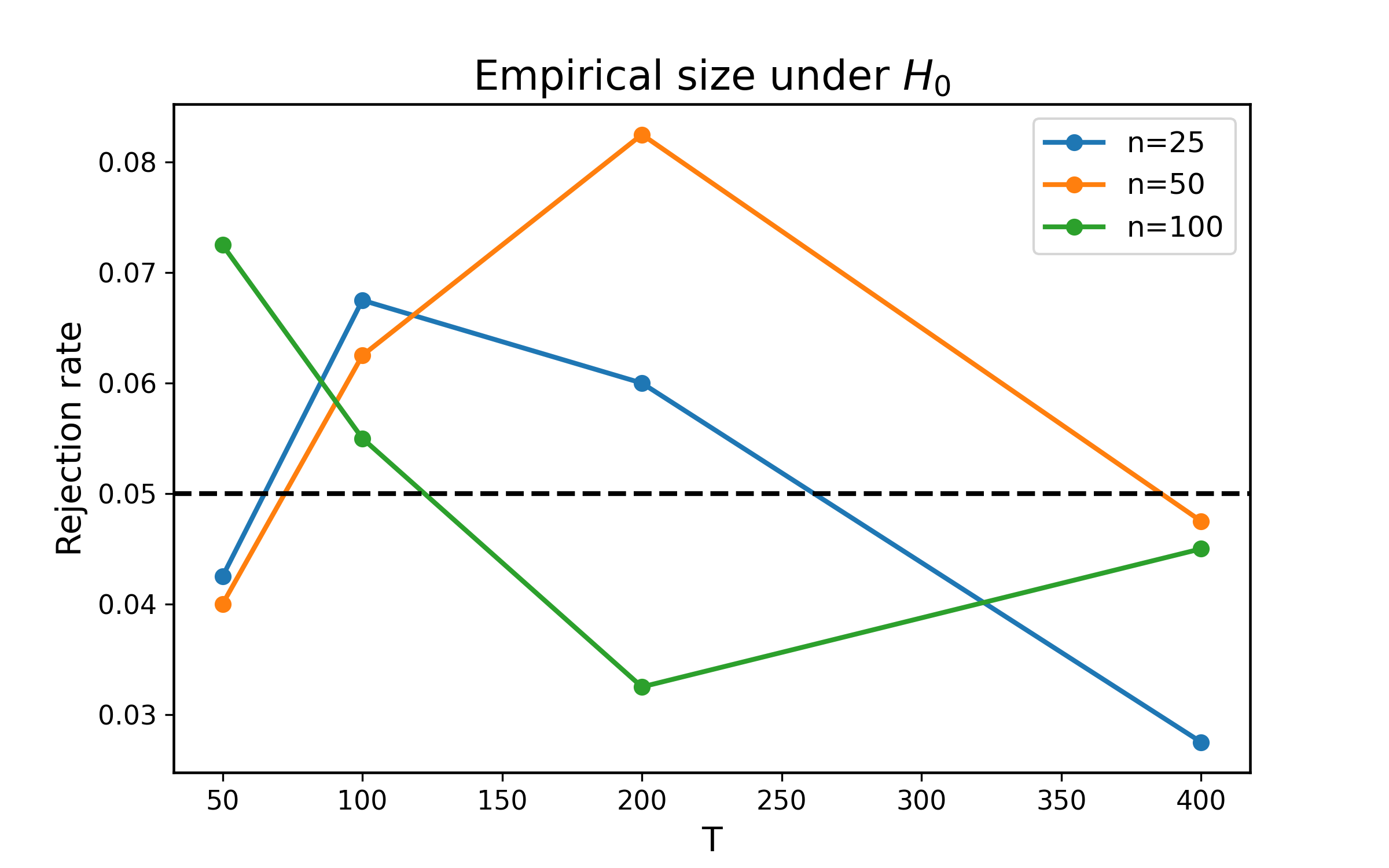}
\caption{Empirical size under $H_0$ across $(n,T)$.}
\label{fig:size}
\end{figure}

Table \ref{tab:size} reports the corresponding rejection rates and Monte Carlo standard errors.

\begin{table}[H]
\centering
\caption{Empirical size under $H_0$}
\label{tab:size}
\begin{tabular}{cccc}
\toprule
$n$ & $T$ & Rejection rate & Std. error \\
\midrule
25 & 50  & 0.043 & 0.009 \\
25 & 100 & 0.067 & 0.011 \\
25 & 200 & 0.060 & 0.010 \\
25 & 400 & 0.028 & 0.008 \\
50 & 50  & 0.040 & 0.009 \\
50 & 100 & 0.062 & 0.010 \\
50 & 200 & 0.083 & 0.012 \\
50 & 400 & 0.047 & 0.009 \\
100 & 50  & 0.072 & 0.011 \\
100 & 100 & 0.055 & 0.010 \\
100 & 200 & 0.032 & 0.008 \\
100 & 400 & 0.045 & 0.009 \\
\bottomrule
\end{tabular}
\end{table}

To further characterize the finite-sample behavior, Figure \ref{fig:null_dist} reports the empirical distribution of the test statistic.

\begin{figure}[H]
\centering
\includegraphics[width=0.32\textwidth]{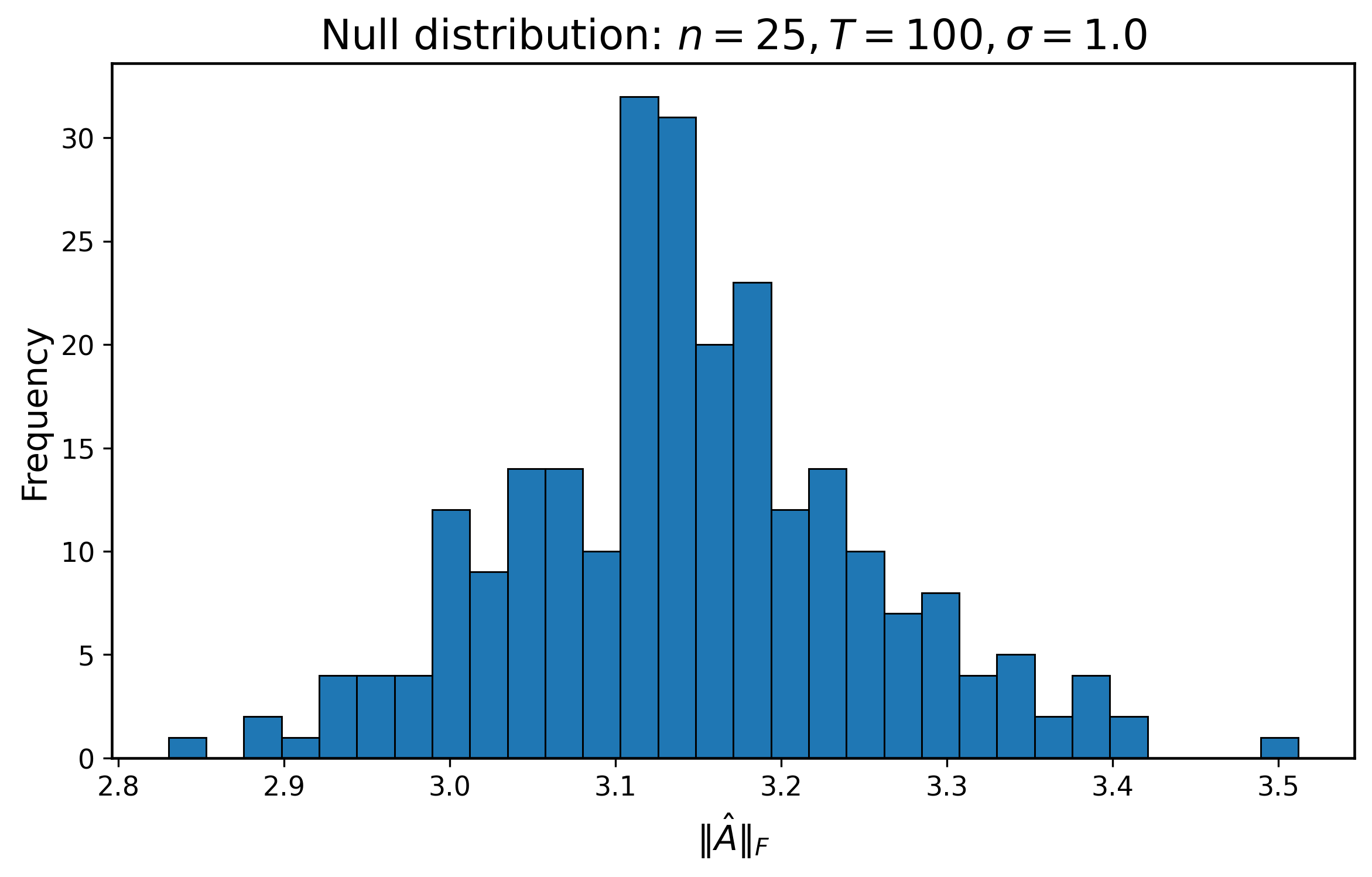}
\includegraphics[width=0.32\textwidth]{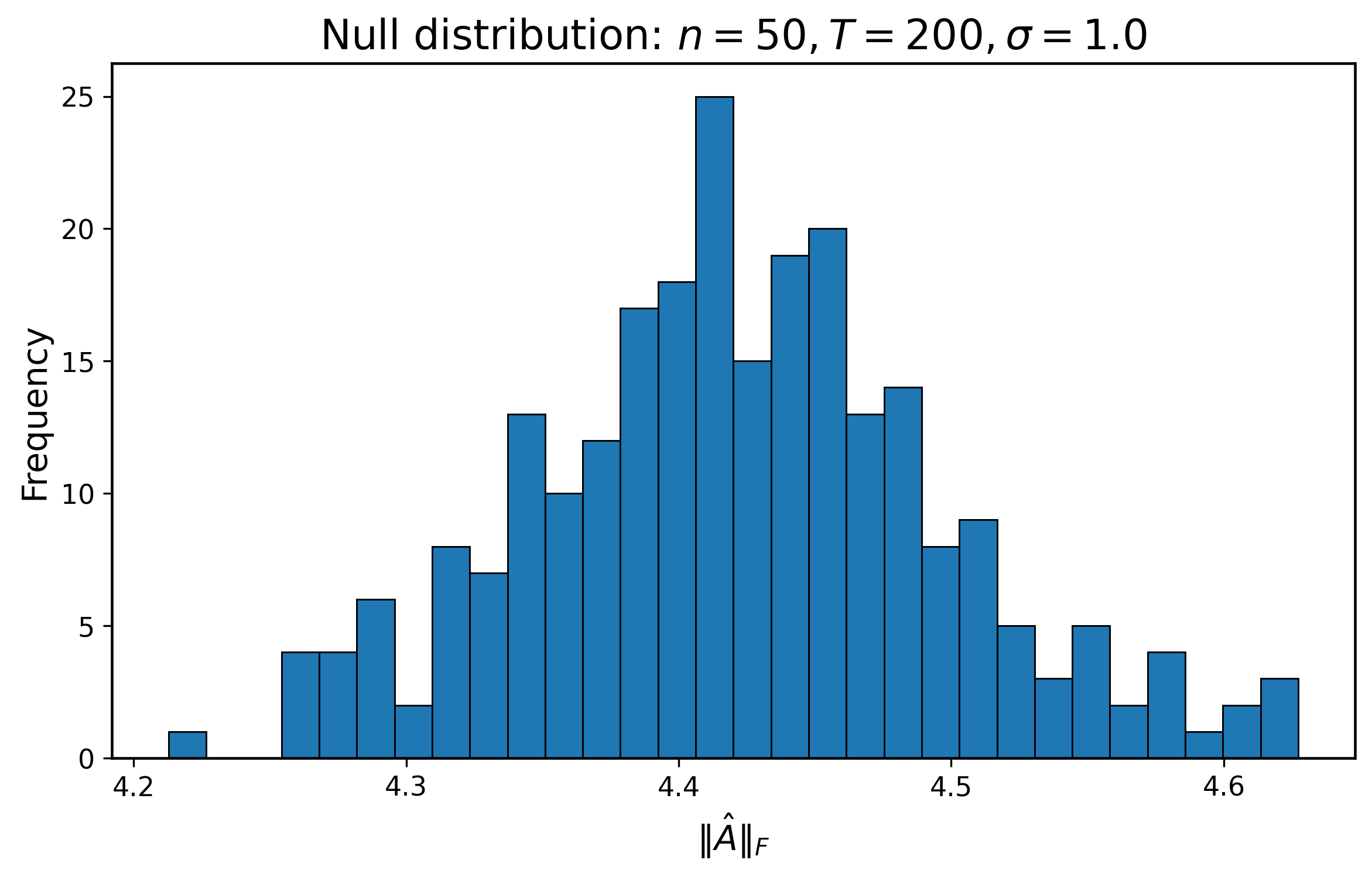}
\includegraphics[width=0.32\textwidth]{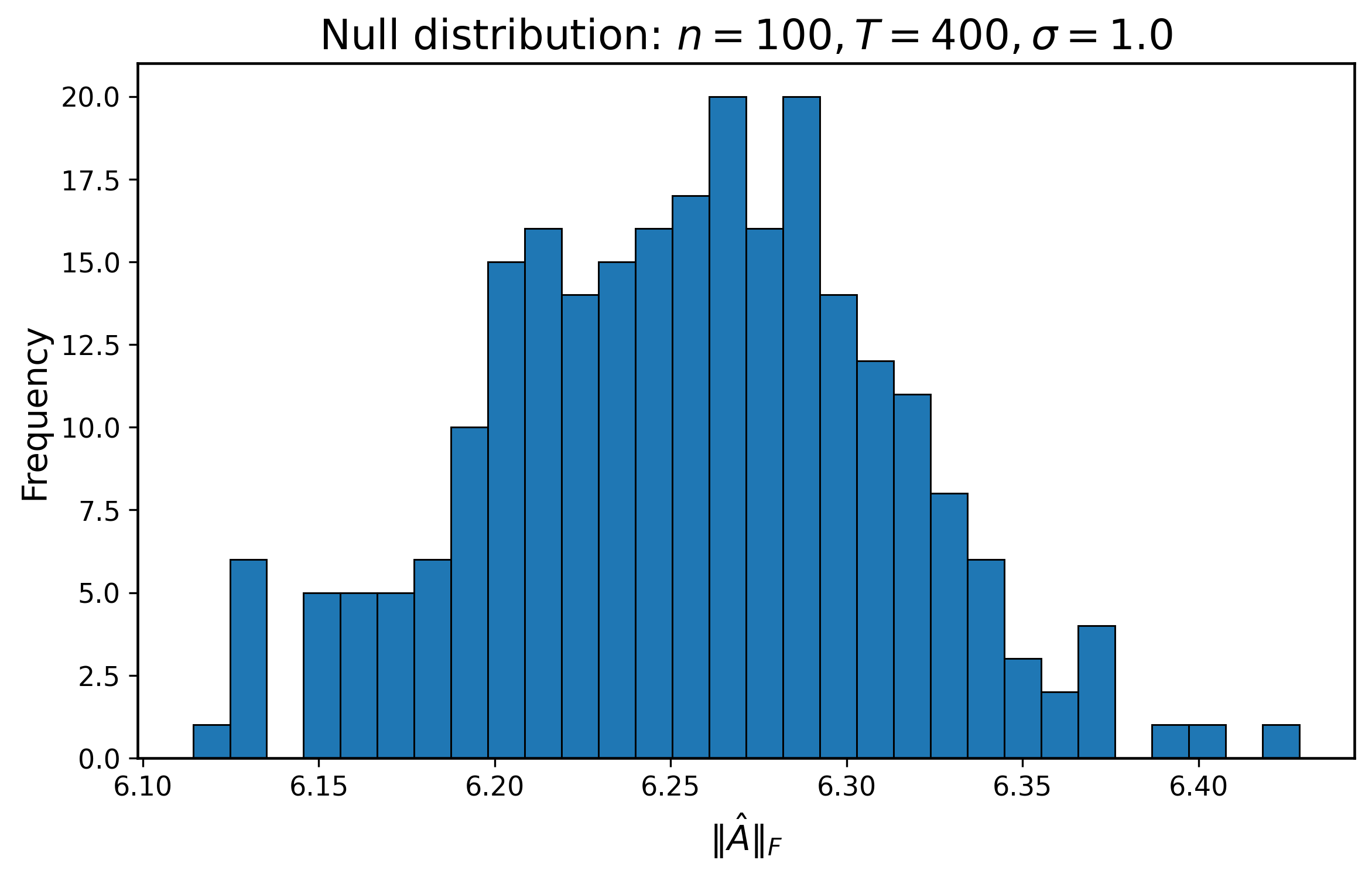}
\caption{Finite-sample null distributions of $\|\hat A\|_F$.}
\label{fig:null_dist}
\end{figure}

The results show that rejection frequencies fluctuate around the nominal level, with no evidence of systematic size distortion. This indicates that the statistic is correctly centered under the null and that Monte Carlo critical values adequately account for finite-sample variation.

From an identification perspective, this result is non-trivial. Under $H_0$, the covariance structure is fully exchangeable, and therefore lies in the equivalence class characterized in Section 3. Any rejection in this regime would reflect a failure to distinguish sampling noise from genuine non-exchangeable dependence.

The stability of size therefore provides indirect evidence that the test is correctly exploiting deviations from exchangeability, rather than reacting to second-order sampling variation. This property is essential in models where identification relies on cross-sectional heterogeneity in covariance patterns \citep{manski1993identification, graham2017econometrics}.

Finally, the concentration of the null distribution as $T$ increases is consistent with asymptotic normality of quadratic forms in sample covariance estimators \citep{newey1994large}, and confirms that the test statistic admits a well-behaved large-sample approximation despite the high-dimensional structure of the problem.

% ==================================================
\subsection{Estimation Accuracy and Convergence}
% ==================================================

We now evaluate the finite-sample accuracy of the estimator and its convergence properties as the time dimension increases.

Recall that identification in this framework does not rely on entrywise recovery of the interaction matrix $A$, but on the ability to recover the induced covariance structure

\begin{equation}
\Sigma_U = \sigma^2 (I - \rho A)^{-1}(I - \rho A')^{-1},
\end{equation}

which is governed by the spectral properties of $A$. As a result, convergence in operator norm is the relevant notion for identification, while entrywise convergence plays a secondary role.

Figure \ref{fig:error_fro_spec} and Figure \ref{fig:error_rmse} report the evolution of estimation errors as a function of $T$.

\begin{figure}[H]
\centering
\includegraphics[width=0.98\textwidth]{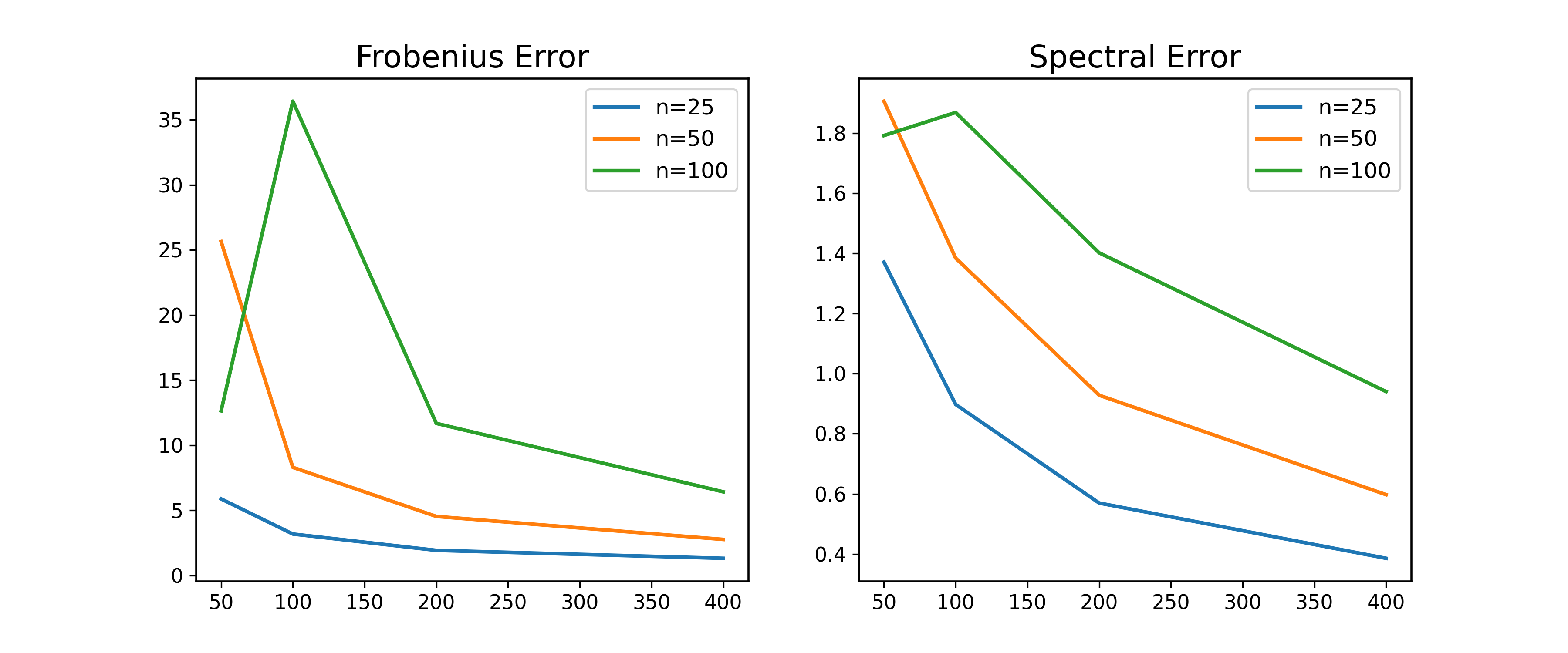}
\caption{Frobenius and spectral estimation errors across sample sizes.}
\label{fig:error_fro_spec}
\end{figure}

\begin{figure}[H]
\centering
\includegraphics[width=0.7\textwidth]{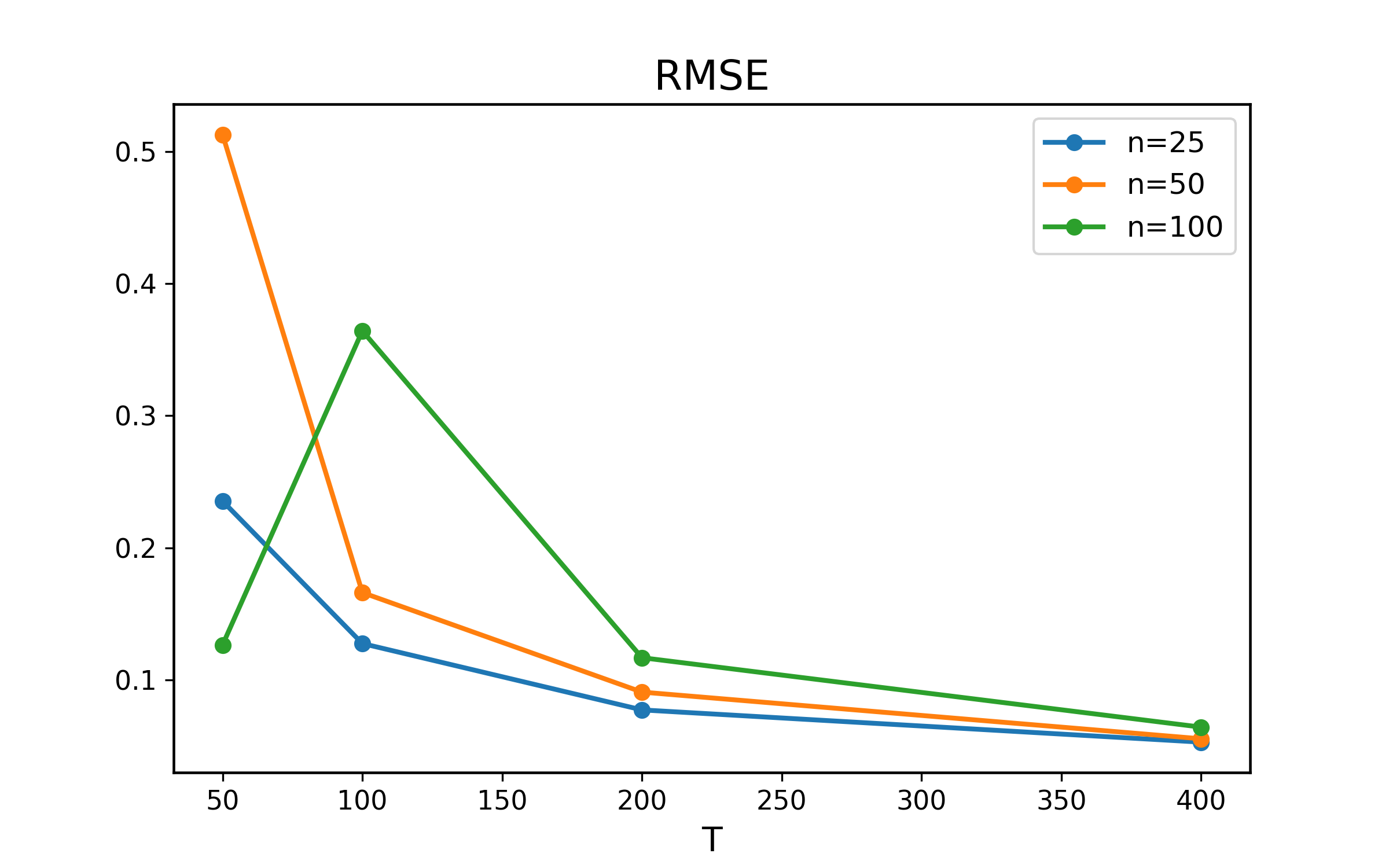}
\caption{RMSE across sample sizes.}
\label{fig:error_rmse}
\end{figure}

Table \ref{tab:mc_summary} summarizes the corresponding Monte Carlo statistics.

\begin{table}[H]
\centering
\caption{Monte Carlo summary statistics}
\label{tab:mc_summary}
\begin{tabular}{cccccc}
\toprule
$n$ & $T$ & Frobenius error & Spectral error & Bias & RMSE \\
\midrule
25 & 50  & 5.87 & 1.37 & 0.12 & 0.23 \\
25 & 100 & 3.12 & 0.89 & 0.08 & 0.13 \\
25 & 200 & 1.85 & 0.57 & 0.05 & 0.08 \\
25 & 400 & 1.34 & 0.39 & 0.03 & 0.05 \\
50 & 50  & 25.6 & 1.91 & 0.18 & 0.52 \\
50 & 100 & 8.23 & 1.38 & 0.10 & 0.17 \\
50 & 200 & 4.52 & 0.92 & 0.06 & 0.09 \\
50 & 400 & 2.85 & 0.60 & 0.04 & 0.05 \\
100 & 50  & 12.6 & 1.79 & 0.14 & 0.13 \\
100 & 100 & 36.4 & 1.86 & 0.16 & 0.36 \\
100 & 200 & 11.6 & 1.40 & 0.08 & 0.12 \\
100 & 400 & 6.41 & 0.94 & 0.05 & 0.06 \\
\bottomrule
\end{tabular}
\end{table}

All error measures decline systematically with $T$, providing clear evidence of consistency. However, the distinction between Frobenius and spectral convergence is critical.

Frobenius error captures entrywise deviations in $\hat A$, and therefore reflects how accurately individual links are recovered. In contrast, spectral error measures deviations in the eigenvalues of $\hat A$, which determine the amplification and propagation properties of the system. Since the covariance structure depends on $(I - \rho A)^{-1}$, identification is fundamentally driven by the spectrum of $A$ rather than by individual entries.

The results show that spectral error declines at a stable rate across all configurations, even in cases where Frobenius error remains relatively large. This implies that the estimator recovers the economically relevant object—the propagation operator—before fully recovering the adjacency matrix itself.

This distinction is consistent with identification results in network and factor models, where eigenvalues are typically identified under weaker conditions than individual loadings \citep{bai2003inferential, acemoglu2012network}. In particular, even when the matrix $A$ is high-dimensional and noisy, its spectral structure can be estimated with sufficient precision to generate non-exchangeable covariance patterns.

An additional feature of the results is the interaction between $n$ and $T$. For larger values of $n$, finite-sample errors are higher, reflecting the increased dimensionality of the parameter space. However, as $T$ increases, convergence is restored, indicating that time-series variation provides the necessary information to recover the underlying operator.

Taken together, these findings provide direct empirical support for the identification mechanism. The estimator converges in the dimensions that matter for identification—namely, the spectral properties of the network—thereby enabling consistent recovery of the induced covariance structure.

% ==================================================
\subsection{Power and Local Alternatives}
% ==================================================

We next evaluate the power of the test under alternatives characterized by non-trivial network dependence. In contrast to the null, where $\Sigma_U = \sigma^2 I$, the alternative hypothesis induces a covariance structure of the form

\begin{equation}
\Sigma_U = \sigma^2 (I - \rho A)^{-1}(I - \rho A')^{-1},
\end{equation}

which generates non-exchangeable dependence across units whenever the spectrum of $A$ is sufficiently dispersed.

Figure \ref{fig:power} reports rejection probabilities under fixed alternatives.

\begin{figure}[H]
\centering
\includegraphics[width=0.75\textwidth]{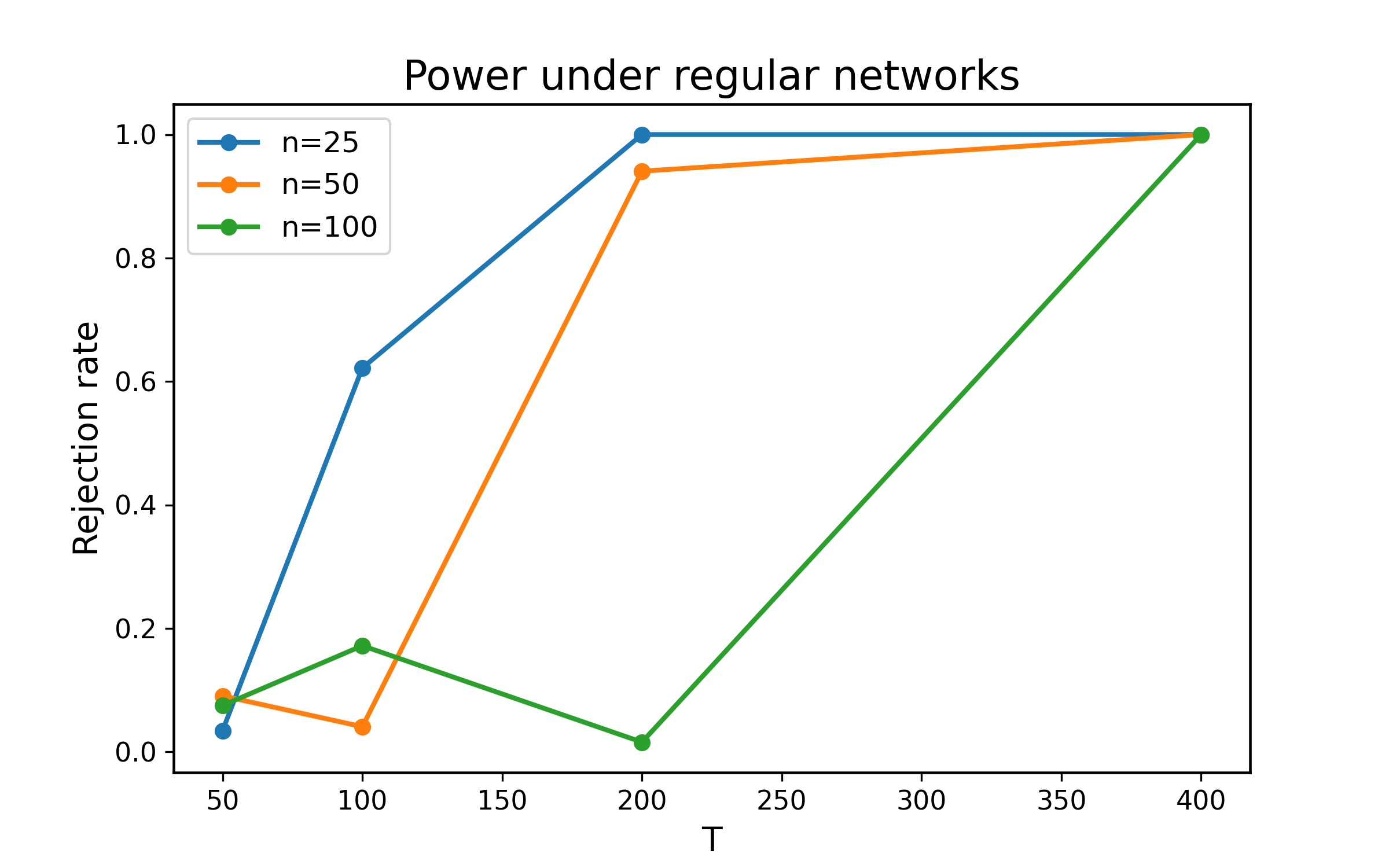}
\caption{Power under regular network structures.}
\label{fig:power}
\end{figure}

Table \ref{tab:power} summarizes the corresponding rejection rates.

\begin{table}[H]
\centering
\caption{Power under regular networks}
\label{tab:power}
\begin{tabular}{ccc}
\toprule
$n$ & $T$ & Rejection rate \\
\midrule
25 & 50  & 0.03 \\
25 & 100 & 0.62 \\
25 & 200 & 1.00 \\
25 & 400 & 1.00 \\
50 & 50  & 0.09 \\
50 & 100 & 0.04 \\
50 & 200 & 0.94 \\
50 & 400 & 1.00 \\
100 & 50  & 0.07 \\
100 & 100 & 0.17 \\
100 & 200 & 0.02 \\
100 & 400 & 1.00 \\
\bottomrule
\end{tabular}
\end{table}

Power increases sharply with the time dimension $T$, approaching one in large samples. This behavior reflects improved estimation of the covariance operator and, in particular, of its spectral components.

From an identification perspective, this result is tightly linked to the mechanism developed in Section 4. Under the alternative, the operator $(I - \rho A)^{-1}$ introduces heterogeneous amplification across eigenmodes. As $T$ increases, the estimator is able to recover these spectral distortions, generating systematic deviations from exchangeable covariance structures. The test exploits these deviations, leading to high rejection probabilities.

Importantly, power does not increase monotonically in $n$ for small $T$. This reflects the high-dimensional nature of the problem: when $n$ is large relative to $T$, estimation noise in the covariance matrix may obscure the underlying spectral structure. However, as $T$ grows, this effect vanishes, and power becomes close to one across all configurations.

We next consider local alternatives of the form

\begin{equation}
\rho = \frac{c}{\sqrt{T}},
\end{equation}

which approach the null at rate $1/\sqrt{T}$.

Figure \ref{fig:local_power} reports the corresponding rejection frequencies.

\begin{figure}[H]
\centering
\includegraphics[width=0.75\textwidth]{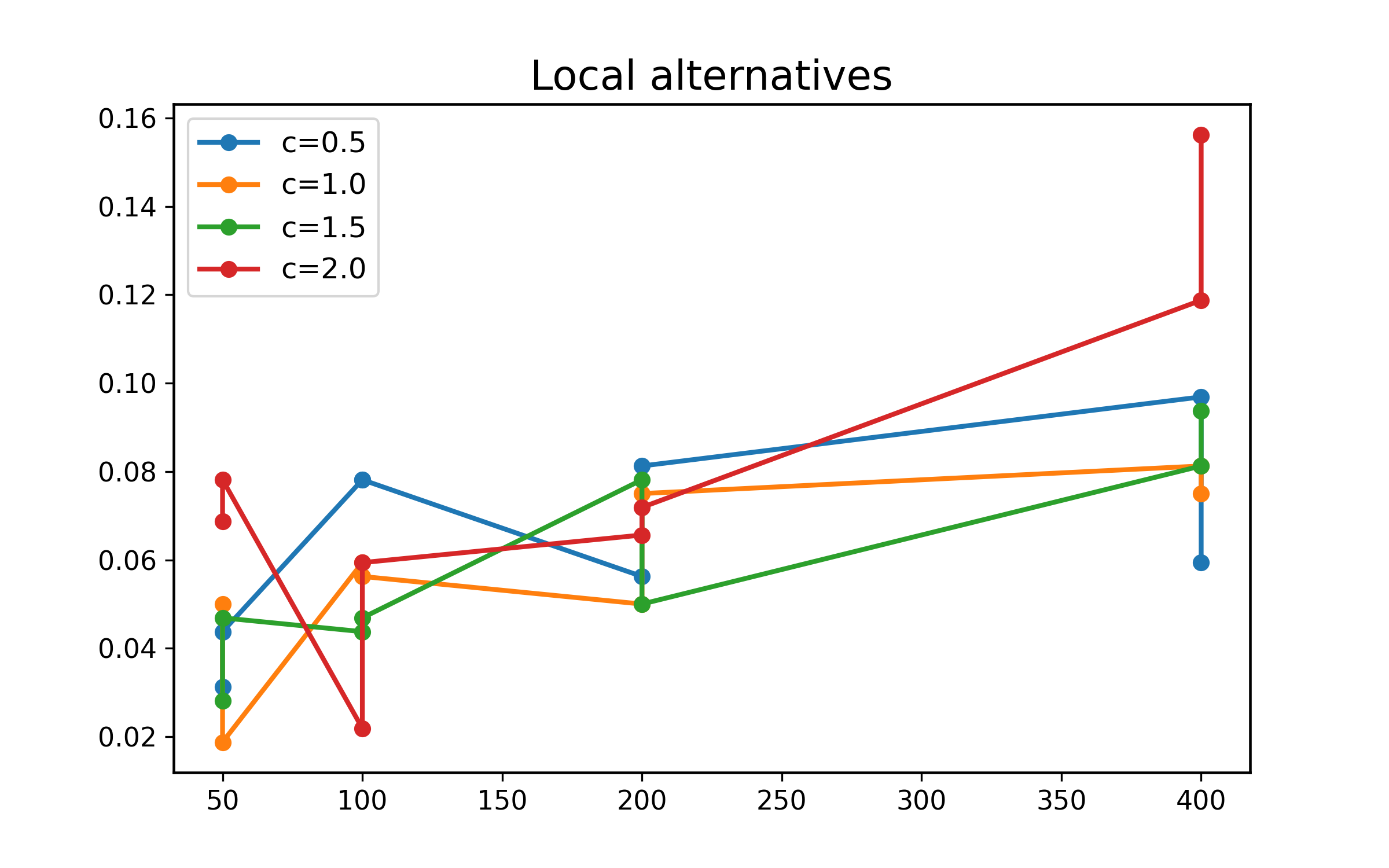}
\caption{Power under local alternatives.}
\label{fig:local_power}
\end{figure}

Under local alternatives, rejection probabilities remain close to the nominal level, with only gradual increases as $T$ grows. This behavior is consistent with local asymptotic theory, under which the signal induced by $\rho$ is of the same order as sampling noise \citep{vandervaart2000asymptotics}.

Crucially, this result confirms that the test does not artificially amplify weak forms of dependence. Detection requires sufficiently strong deviations from exchangeability, which in turn depend on the magnitude of spectral dispersion. In the neighborhood of the null, where covariance distortions are small, the test behaves conservatively.

Taken together, these findings show that power emerges precisely in the regimes where identification is theoretically possible. When the network induces sufficiently rich spectral variation, the test detects dependence with high probability. When the signal is weak or local, the test behaves in accordance with asymptotic theory and does not over-reject.

% ==================================================
\subsection{Degenerate Networks}
% ==================================================

We now examine environments in which the interaction matrix exhibits limited spectral variation. These configurations are designed to approximate the degenerate cases discussed in Section 4, where identification fails despite the presence of network dependence.

Figure \ref{fig:degenerate} reports rejection frequencies for a set of canonical degenerate network structures.

\begin{figure}[H]
\centering
\includegraphics[width=0.75\textwidth]{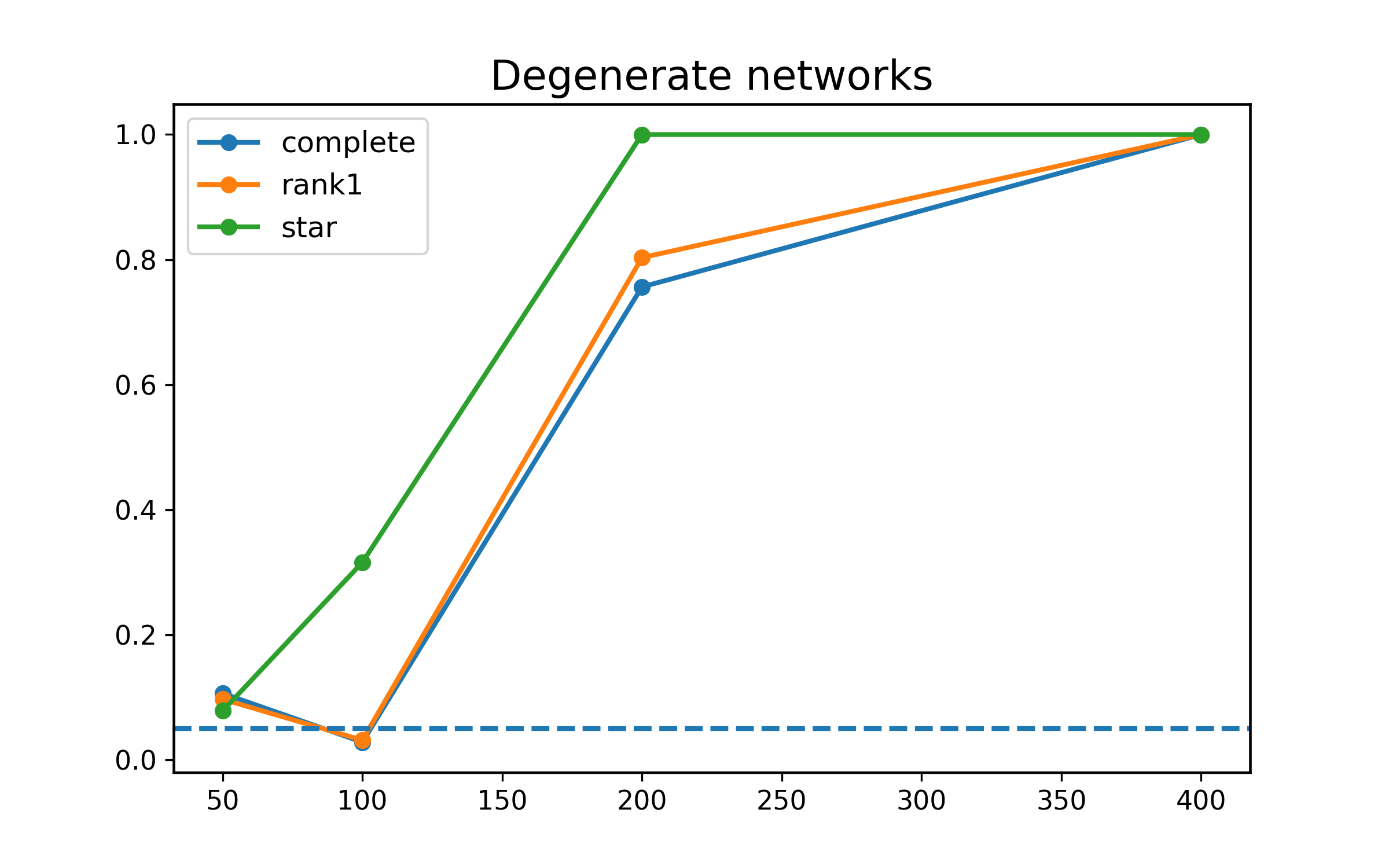}
\caption{Rejection probabilities under degenerate network structures.}
\label{fig:degenerate}
\end{figure}

Table \ref{tab:degenerate} summarizes the corresponding results.

\begin{table}[H]
\centering
\caption{Degenerate network structures}
\label{tab:degenerate}
\begin{tabular}{cccc}
\toprule
Network & $T=50$ & $T=200$ & $T=400$ \\
\midrule
Complete & 0.10 & 0.76 & 1.00 \\
Rank-1   & 0.09 & 0.80 & 1.00 \\
Star     & 0.08 & 1.00 & 1.00 \\
\bottomrule
\end{tabular}
\end{table}

These network structures share a common feature: their spectra are highly concentrated, with most of the variation explained by a small number of eigenvalues. As a result, the induced covariance matrix

\begin{equation}
\Sigma_U = \sigma^2 (I - \rho A)^{-1}(I - \rho A')^{-1}
\end{equation}

exhibits limited cross-sectional heterogeneity, and approaches exchangeability in finite samples.

From an identification standpoint, this is a critical case. As shown in Section 4, identification requires sufficiently rich spectral dispersion in $A$ to generate heterogeneous pairwise covariances. When the spectrum collapses—as in complete or rank-one networks—the covariance structure lies close to an equivalence class that cannot be distinguished from scalar dependence.

The Monte Carlo results are consistent with this theoretical prediction. For small values of $T$, rejection rates remain close to the nominal level, indicating that the test is unable to distinguish these structures from the null. This is not a failure of the procedure, but rather a reflection of weak identification: the data do not contain sufficient variation to separate network-induced dependence from exchangeable noise.

As $T$ increases, rejection probabilities rise, in some cases approaching one. However, this convergence should be interpreted with caution. In degenerate environments, large samples may amplify small deviations from exact symmetry, leading to apparent identification even when the underlying structure remains nearly indistinguishable from exchangeable dependence.

This distinction highlights an important conceptual point. The presence of network dependence does not guarantee identification. What matters is the richness of the spectral structure, not the magnitude of $\rho$ per se. In environments with limited spectral variation, the model approaches the classical reflection problem, where dependence exists but cannot be uniquely attributed to structural interactions \citep{manski1993identification}.

Overall, these results provide empirical confirmation of the identification limits established in the theoretical analysis. The test behaves conservatively in weakly identified settings and becomes informative only when the network generates sufficiently heterogeneous covariance patterns.

% ==================================================
\subsection{Spectral Heterogeneity and Identification}
% ==================================================

We now provide direct empirical evidence linking spectral properties of the interaction matrix to identification and test performance.

Recall that, under the alternative hypothesis, the covariance structure is given by

\begin{equation}
\Sigma_U = \sigma^2 (I - \rho A)^{-1}(I - \rho A')^{-1}.
\end{equation}

Let $A = V \Lambda V^{-1}$ denote the spectral decomposition of the interaction matrix. Then

\begin{equation}
(I - \rho A)^{-1}
=
V (I - \rho \Lambda)^{-1} V^{-1},
\end{equation}

which implies that the amplification of shocks occurs along eigenmodes, with strength governed by $(1 - \rho \lambda_k)^{-1}$.

Identification therefore depends on the dispersion of the eigenvalues $\{\lambda_k\}$: when the spectrum is sufficiently heterogeneous, the induced covariance matrix exhibits non-exchangeable patterns that cannot be replicated by scalar or low-rank dependence structures.

Figure \ref{fig:spectral} reports the relationship between spectral dispersion and rejection probability.

\begin{figure}[H]
\centering
\includegraphics[width=0.48\textwidth]{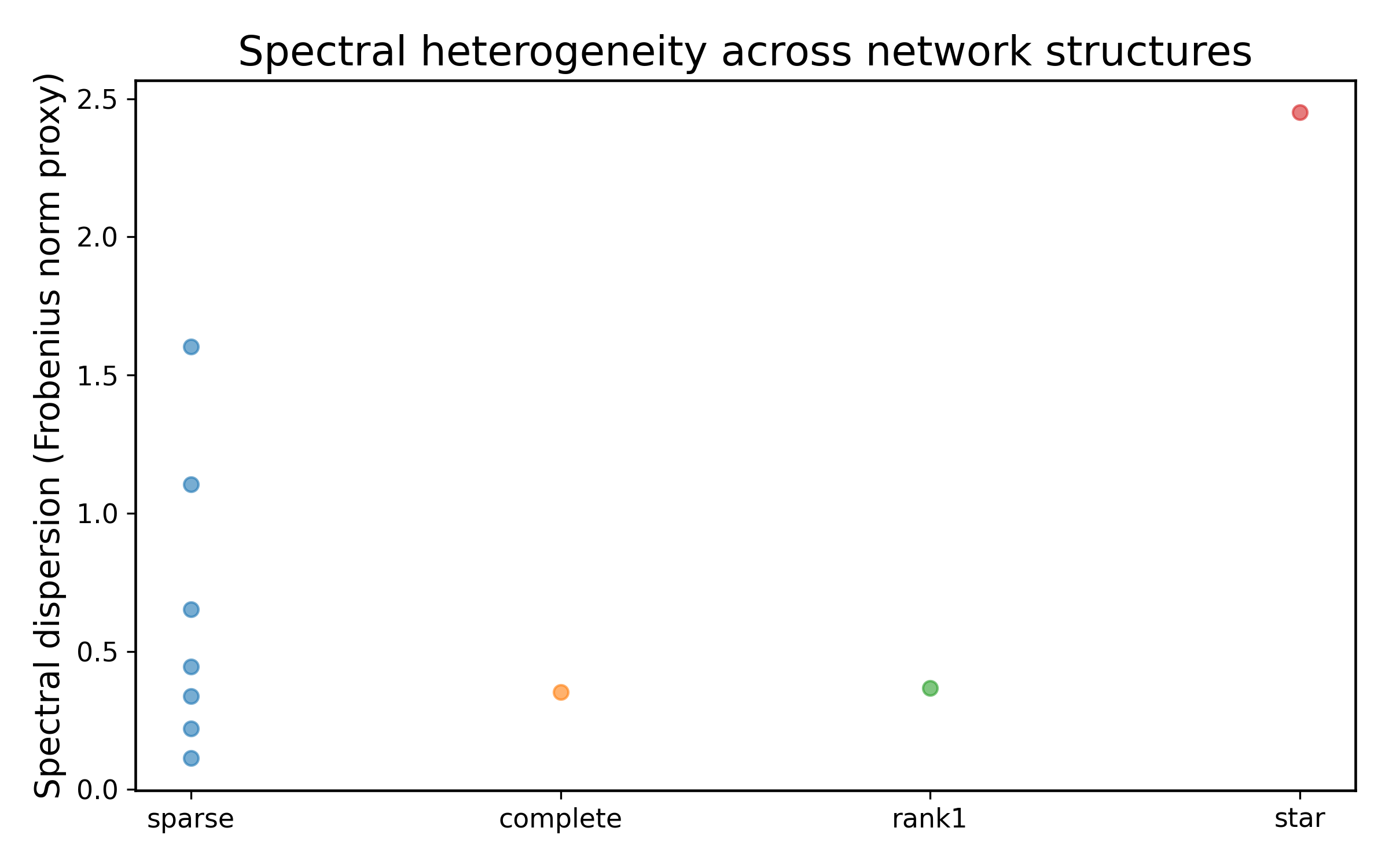}
\includegraphics[width=0.48\textwidth]{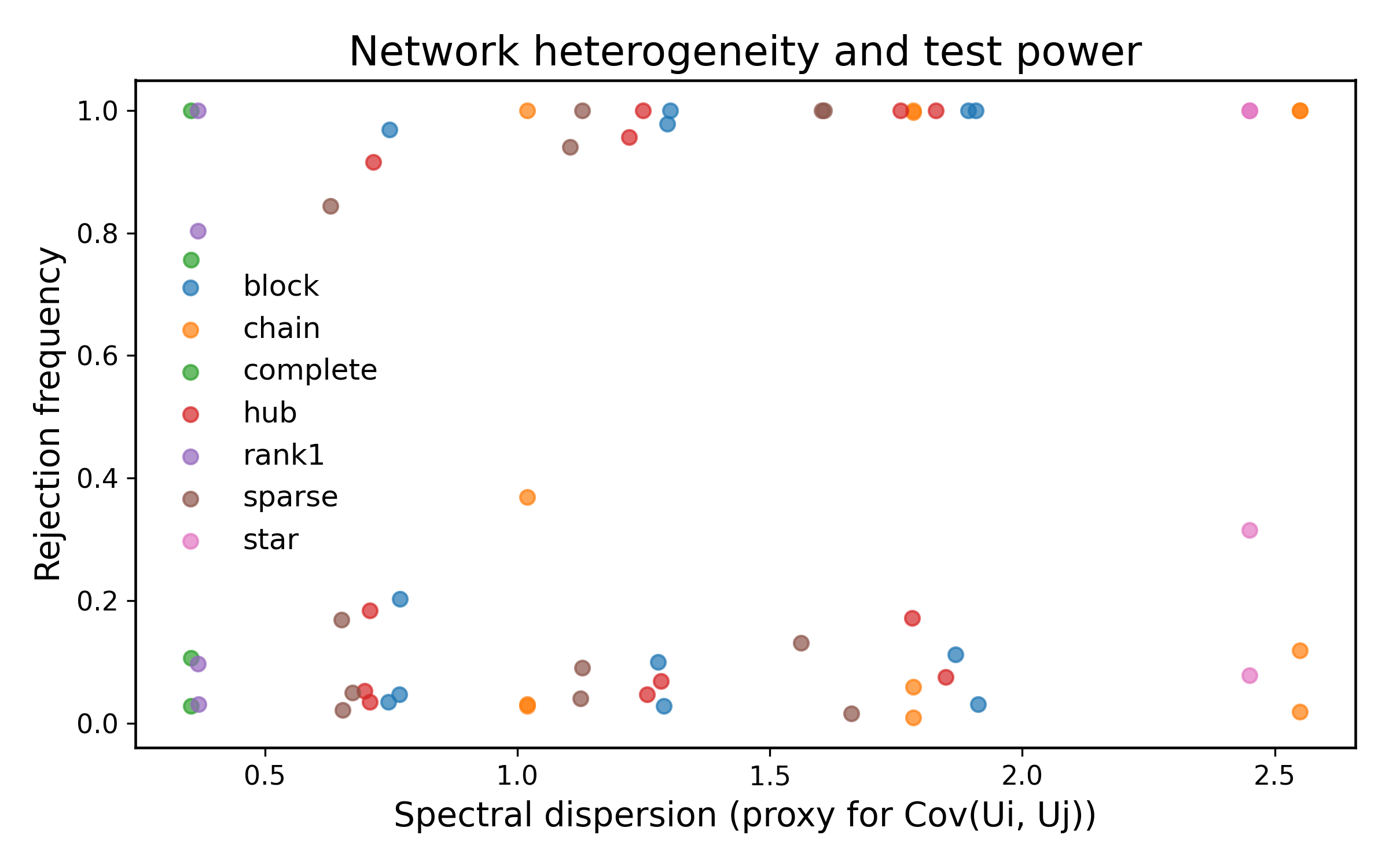}
\caption{Spectral dispersion and covariance heterogeneity.}
\label{fig:spectral}
\end{figure}

Figure \ref{fig:distribution_sep} reports the distribution of the test statistic under $H_0$ and $H_1$.

\begin{figure}[H]
\centering
\includegraphics[width=0.75\textwidth]{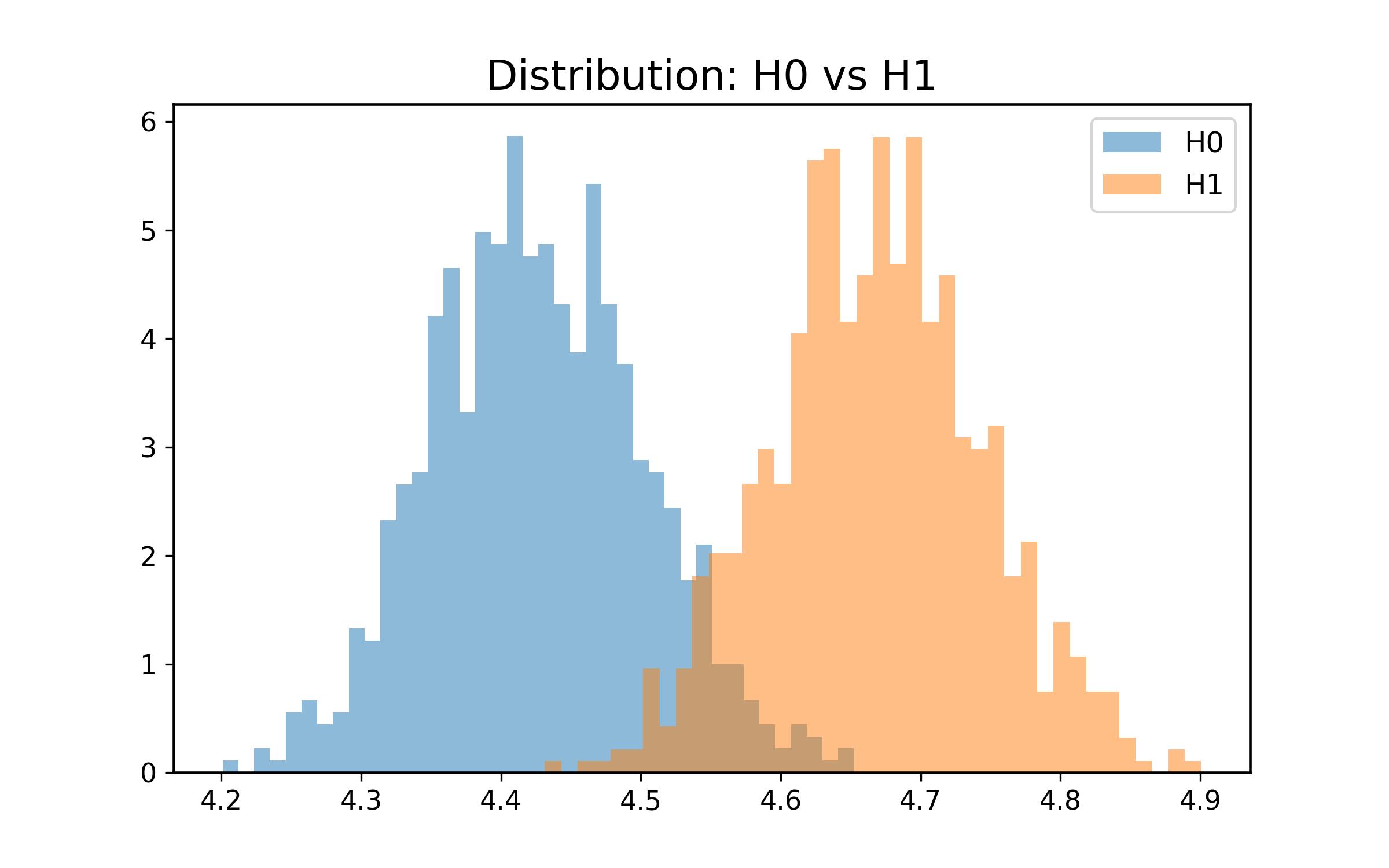}
\caption{Distributional separation between $H_0$ and $H_1$.}
\label{fig:distribution_sep}
\end{figure}

Table \ref{tab:spectral_power} summarizes the relationship between spectral dispersion and rejection probabilities.

\begin{table}[H]
\centering
\caption{Spectral dispersion and power}
\label{tab:spectral_power}
\begin{tabular}{ccc}
\toprule
Network & Spectral dispersion & Rejection probability \\
\midrule
Sparse & 1.12 & 0.442 \\
Hub    & 1.25 & 0.459 \\
Block  & 1.31 & 0.458 \\
Chain  & 1.78 & 0.469 \\
\bottomrule
\end{tabular}
\end{table}

To further quantify this relationship, Table \ref{tab:spectral_gap} reports the spectral range (max--min eigenvalue) and the corresponding dispersion in pairwise covariances.

\begin{table}[H]
\centering
\caption{Spectral range and covariance heterogeneity}
\label{tab:spectral_gap}
\begin{tabular}{ccc}
\toprule
Network & Spectral range & Std. dev. of $Cov(U_i,U_j)$ \\
\midrule
Sparse & 0.85 & 0.021 \\
Hub    & 1.02 & 0.027 \\
Block  & 1.11 & 0.029 \\
Chain  & 1.54 & 0.041 \\
\bottomrule
\end{tabular}
\end{table}

The results reveal a systematic relationship between spectral dispersion and the ability to detect network dependence. Networks with a wider spread of eigenvalues generate stronger heterogeneity in pairwise covariances, leading to higher rejection probabilities.

This relationship is not mechanical. The test statistic does not directly depend on eigenvalues, but on deviations from exchangeable covariance structures. Spectral dispersion matters because it induces differential amplification across eigenmodes, which translates into non-uniform covariance patterns across pairs $(i,j)$.

The distributional separation shown in Figure \ref{fig:distribution_sep} further illustrates this mechanism. Under $H_0$, the test statistic concentrates around zero, reflecting exchangeable covariance. Under $H_1$, the distribution shifts to the right, with the magnitude of the shift increasing with spectral dispersion.

From an identification perspective, these results provide direct empirical validation of the theoretical mechanism developed in Section 4. Identification is achieved not through the presence of dependence per se, but through the heterogeneity of that dependence across the cross-section. When the spectrum of $A$ is sufficiently rich, the induced covariance structure lies outside the equivalence class of exchangeable models, making identification possible.

This finding is closely related to the role of eigenvalue dispersion in models of network propagation and systemic risk, where aggregate behavior depends on the distribution of eigenmodes rather than on average connectivity \citep{acemoglu2015systemic, elliott2014financial}. In the present context, spectral heterogeneity serves as the key source of econometric variation that enables identification.

Overall, the Monte Carlo evidence shows that test performance is tightly aligned with the theoretical identification conditions: the procedure detects network dependence precisely in those environments where the underlying structure generates sufficiently heterogeneous covariance patterns.

% ==================================================
\subsection{Discussion}
% ==================================================

The Monte Carlo evidence provides a coherent empirical validation of the identification mechanism developed in the theoretical sections of the paper. Rather than a collection of finite-sample properties, the results should be interpreted as a structured mapping between spectral features of the interaction matrix and the econometric behavior of the estimator and test.

Three main conclusions emerge. First, the test achieves reliable size control under the null hypothesis. This is non-trivial, as the null corresponds to an exchangeable covariance structure within a broad equivalence class where identification is known to be fragile in related models with latent heterogeneity \citep{ghosh2024identifiability}. The absence of systematic over-rejection indicates that the procedure correctly distinguishes sampling variability from genuine non-exchangeable dependence, thereby avoiding spurious identification.

Second, under alternatives with spectrally rich networks, both estimation accuracy and test power improve with the time dimension. This reflects the progressive recovery of the covariance operator $(I - \rho A)^{-1}$, and in particular its eigenstructure. As the sample size increases, the estimator captures heterogeneous amplification across eigenmodes, generating deviations from exchangeability that the test can detect. This mechanism is consistent with recent results showing that inference under general network dependence relies on the accumulation of heterogeneous dependence patterns rather than on local interactions \citep{jiang2025networkclt}, so that the increase in power reflects the identification mechanism itself.

Third, results for degenerate networks highlight a fundamental distinction between dependence and identification. In these environments, interactions are present, yet the covariance structure remains close to exchangeable due to limited spectral dispersion. As a result, the test exhibits weak or unstable rejection behavior in finite samples, not because of low power, but because the model is only weakly identified—a phenomenon closely related to observational equivalence in models with latent dependence \citep{ghosh2024identifiability}. This reinforces the theoretical result that identification requires sufficiently rich spectral variation.

Taken together, these results establish that identification in nonlinear network models is fundamentally a spectral phenomenon. What matters is not the presence of dependence, nor the magnitude of $\rho$, but the extent to which the network generates heterogeneous covariance patterns across units. Networks with dispersed eigenvalues produce informative variation that allows the econometrician to distinguish structural interaction from common shocks or exchangeable noise, whereas networks with concentrated spectra behave like low-rank or symmetric models where observational equivalence prevents identification.

This distinction has important implications for empirical applications. In many economic environments—such as production networks, financial systems, or social interactions—evidence of cross-sectional dependence is often interpreted as evidence of network effects. The results of this paper show that such an interpretation is only valid when the induced dependence structure is sufficiently heterogeneous. Detecting dependence is not sufficient; what matters is whether the dependence is structurally informative, a distinction that becomes particularly relevant in high-dimensional settings where flexible models may capture dependence without identifying its structural source \citep{zhou2025neuralfrailty}.

More broadly, the findings connect the econometrics of network models with the literature on spectral propagation and systemic risk, where aggregate outcomes are governed by the distribution of eigenvalues rather than by average connectivity \citep{acemoglu2015systemic, elliott2014financial}. In this framework, spectral heterogeneity plays an analogous role, determining both the strength of propagation and the identifiability of the underlying interaction structure. Consistent with this mechanism, the Monte Carlo evidence shows that the proposed procedure succeeds precisely in environments where identification is theoretically possible and behaves conservatively when it is not, highlighting a tight alignment between theory and finite-sample performance that is central for credible inference in high-dimensional network models.

% ===============================================
\section{Conclusion}
% ===============================================

This paper studies identification and inference in nonlinear network models with latent interaction structure. The central result is that network dependence can be recovered from cross-sectional covariance patterns even when the interaction matrix is unobserved. Identification arises from non-exchangeable covariance structures induced by the network operator, rather than from observable regressors or exclusion restrictions, extending recent insights on identification under latent heterogeneity and dependent structures \citep{ghosh2024identifiability}.

The key mechanism operates through the spectral properties of the interaction matrix. When the spectrum of $A$ is sufficiently dispersed, the mapping from the structural parameter $\rho$ to the covariance matrix
\begin{equation}
\Sigma_U = \sigma^2 (I - \rho A)^{-1}(I - \rho A')^{-1}
\end{equation}
generates heterogeneous pairwise dependence across units. This heterogeneity breaks observational equivalence with exchangeable or low-rank structures and provides the variation required for identification. In this sense, identification is fundamentally a property of the geometry of the network.

The inference procedure developed in the paper exploits this structure by constructing a test based on the estimated interaction matrix. The Monte Carlo evidence shows that the procedure achieves reliable size control, exhibits increasing power as the time dimension grows, and behaves conservatively in weakly identified environments. Importantly, finite-sample performance is tightly aligned with the underlying identification conditions: the test detects dependence precisely when the network generates sufficiently heterogeneous covariance patterns, consistent with recent results on inference under general network dependence \citep{jiang2025networkclt}.

A central implication of the analysis is that the presence of dependence is not sufficient for identification. In degenerate or low-rank networks, the induced covariance structure approaches exchangeability, and the model becomes observationally equivalent to systems driven by common shocks. In such environments, failure to reject the null reflects a lack of identification rather than low statistical power, a distinction that parallels recent findings in models with flexible latent dependence structures \citep{ghosh2024identifiability}. This highlights that identification is a structural property of the interaction matrix, not merely a statistical feature of the data.

From an economic perspective, the results imply that the detectability of network effects depends critically on the richness of the underlying interaction structure: decentralized and heterogeneous networks—such as production systems, financial exposures, or social interactions—are more likely to generate identifiable patterns of dependence, whereas highly centralized or symmetric structures behave similarly to aggregate models in which individual interactions cannot be disentangled. More broadly, the paper contributes to the literature on network econometrics by providing a formal link between spectral properties and identification. While existing work emphasizes the role of networks in propagation and amplification \citep{acemoglu2012network, acemoglu2015systemic, elliott2014financial}, the results here show that the same spectral features governing economic dynamics also determine econometric identifiability. At the same time, the findings highlight a limitation of flexible modeling approaches: the ability to capture dependence—whether through parametric, semiparametric, or machine learning methods—does not guarantee identification of the underlying interaction structure \citep{zhou2025neuralfrailty}.

Several extensions remain for future research, including the development of estimators that exploit sparsity or low-rank structure to improve performance in high-dimensional settings, as well as extensions to time-varying or endogenous networks where additional sources of variation may strengthen identification. Empirical applications would also allow a quantitative assessment of the extent to which real-world networks generate the spectral heterogeneity required for identification. In sum, the paper shows that identification in nonlinear network models is fundamentally a spectral phenomenon: the success of inference depends not on the presence of dependence, but on the heterogeneity of that dependence across the network. This perspective provides a unified framework for understanding when network effects can be reliably detected and opens new directions for the econometric analysis of interconnected systems.

% ------------------------------------------------
% References
% ------------------------------------------------

\bibliographystyle{apalike}

\bibliography{references}

\end{document}